\newcommand{\id}{\operatorname{id}}
\DeclareMathAlphabet\mathbfcal{OMS}{cmsy}{b}{n}
\newtheorem{thm}{Theorem}[section]
\newtheorem{Thm}[thm]{Theorem}
\newtheorem{Lemma}[thm]{Lemma}
\newtheorem{Prop}[thm]{Proposition}
\newtheorem*{claim*}{Claim}
\newtheorem{Cor}[thm]{Corollary}
\newtheorem{Open}[thm]{Open Problem}
\theoremstyle{definition}
\newtheorem{defn}[thm]{Definition}
\newtheorem{ex}[thm]{Example}
\newtheorem{rmrk}[thm]{Remark}
\newtheorem{fact}[thm]{Fact}
\newenvironment{Example}{\begin{ex}}{\hfill $\dashv$\end{ex}}
\newenvironment{Def}{\begin{defn}}{\hfill $\dashv$\end{defn}}
\newenvironment{Remark}{\begin{rmrk}}{\hfill $\dashv$\end{rmrk}}
\DeclareMathOperator{\dom}{dom}
\DeclareMathOperator{\pr}{pr}
\newenvironment{enumerate-(a)}{\begin{enumerate}[label={\upshape (\alph*)}, leftmargin=2pc]}{\end{enumerate}}
\newenvironment{enumerate-(a)-r}{\begin{enumerate}[label={\upshape (\alph*)}, leftmargin=2pc,resume]}{\end{enumerate}}
\newenvironment{enumerate-(a)-5}{\begin{enumerate}[label={\upshape (\alph*)}, leftmargin=2pc,start=5]}{\end{enumerate}}
\newenvironment{enumerate-(A)}{\begin{enumerate}[label={\upshape (\Alph*)}, leftmargin=2pc]}{\end{enumerate}}
\newenvironment{enumerate-(A)-r}{\begin{enumerate}[label={\upshape (\Alph*)}, leftmargin=2pc,resume]}{\end{enumerate}}
\newenvironment{enumerate-(i)}{\begin{enumerate}[label={\upshape (\roman*)}, leftmargin=2pc]}{\end{enumerate}}
\newenvironment{enumerate-(i)-r}{\begin{enumerate}[label={\upshape (\roman*)}, leftmargin=2pc,resume]}{\end{enumerate}}
\newenvironment{enumerate-(I)}{\begin{enumerate}[label={\upshape (\Roman*)}, leftmargin=2pc]}{\end{enumerate}}
\newenvironment{enumerate-(I)-r}{\begin{enumerate}[label={\upshape (\Roman*)}, leftmargin=2pc,resume]}{\end{enumerate}}
\newenvironment{enumerate-(1)}{\begin{enumerate}[label={\upshape (\arabic*)}, leftmargin=2pc]}{\end{enumerate}}
\newenvironment{enumerate-(1)-r}{\begin{enumerate}[label={\upshape (\arabic*)}, leftmargin=2pc,resume]}{\end{enumerate}}
\newenvironment{enumerate-(star)}{\begin{enumerate}[label={\upshape{(\( \star_{ \arabic*} \))}}, leftmargin=2pc]}{\end{enumerate}}
\renewcommand{\d}{\delta}
\newcommand{\Q}{\mathbb{Q}} % rationals
\newcommand{\R}{\mathbb{R}} %reals
\newcommand{\N}{\mathbb{N}} %natural numbers
\newcommand{\UU}{\mathcal{U}}
\newcommand{\YY}{\mathcal{Y}}
\newcommand{\II}{\mathcal{I}}
\newcommand{\XX}{\mathcal{X}}
\mathchardef\mhyphen="2D
\newcommand{\rest}{\!\restriction\!}
\newcommand{\homeo}{\approx}
\newcommand{\es}{\varnothing}
\renewcommand{\ge}{\geqslant}
\renewcommand{\le}{\leqslant}
\newcommand{\e}{\varepsilon}
\renewcommand{\Cup}{\bigcup}
\title{Equivalent Environments and Covering Spaces\\ for Robots}
\author{Vadim K. Weinstein and Steven M. LaValle}
\affil{Faculty of Information Technology and Electrical Engineering\\ University of Oulu, Finland}
\begin{document}

\maketitle
\thispagestyle{empty}

\begin{abstract}
This paper formally defines a robot system, including its sensing and actuation components, 
as a general, topological dynamical system. 
The focus is on determining general conditions 
under which various environments in which the robot can be placed are indistinguishable.
A key result is that, under very general conditions, covering maps witness such indistinguishability. 
This formalizes the intuition behind the well studied loop 
closure problem in robotics. An important special case is where the sensor mapping
reports an invariant of the local topological (metric) structure of an environment because
such structure is preserved by (metric) covering maps. Whereas coverings provide a sufficient condition for the equivalence of environments,
we also give a necessary condition using bisimulation. 
The overall framework is applied to unify previously 
identified phenomena in robotics and related fields, in which moving agents with sensors 
must make inferences about their environments based on limited data.  Many open problems 
are identified.
\end{abstract}

\newpage
\tableofcontents

\newpage

\section{Introduction}\label{sec:intro}

\begin{figure}
  \centering
  \includegraphics[width=0.7\textwidth]{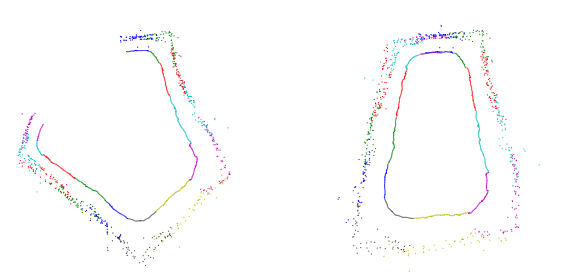}
  \caption{Map reconstruction before loop closure (left) and after
    loop closure (right). Note how a new metric has been defined on
    the space to support the new information that the distance of two
    previously distinct points equals now to zero. Figure reproduced
    from~\cite{williams08}.}
  \label{fig:Loop}
\end{figure}

When a mobile robot explores a multiply connected environment using
sensors, it frequently encounters the well-known problem of \emph{loop
closure}, in which it must detect that it has returned to a previously
visited location.  Robots combine information from sensor observations at multiple times, leading to the {\em filtering} problem (also known as {\em sensor fusion}) of appropriately aggregating or summarizing its data.  Shown in Figure \ref{fig:Loop}, the robot becomes
mistaken (from our perspective) about its position over time as it tries to build a geometric
representation of its environment.  The problem is part of 
\emph{SLAM (simultaneous localization and mapping})~\cite{DisNewClaDurCso01,HahFoxBurThr03}, 
a critical operation in many robots and autonomous systems.  Similarly, a person donning a
VR headset can be tricked into walking in circles when they believe
they are heading straight, using a method known as \emph{redirected walking}~\cite{Suma2013}.  Likewise, according to a popular theory~\cite{Sotthibandhu1979}, a moth inadvertently travels in circles around a fire because it cannot distinguish the fire and a celestial light source%
\protect{\footnote{This theory was recently disputed in~\cite{Fabian2024}.}. 
This phenomenon occurs, and can be studied, in the framework of
minimalist robotics such as Braitenberg robots~\cite{Bra84}, wall-following robots
\cite{Katsev2011,Taylor2013}, and robots with topological sensing and filtering, 
such as the gap-navigating robots of~\cite{TovMurLav07}. It is also closely
related to some version of the graph exploration problem~\cite{Kuipers1988ARQ,Thrun1998}.

All of these examples (and many more) share an underlying principle: 
The covering map $f\colon \R\to S^1$, $t\mapsto e^{it}$ commutes with the 
agent's sensorimotor behavior, whether the agent be a robot, human, or insect.  
More generally, topological ambiguities arise from limited sensing and actuation capabilities, and we show in this paper that they are naturally understood in terms of covering spaces.
Figure~\ref{fig:Indistinguishable} shows examples of spaces and their coverings
which, considered as robot's environments, will be shown to be strongly indistinguishable
(Section~\ref{ssec:Strongly}). As a converse we obtain an invariant of the purely
topological equivalence relation between spaces which holds if and only if 
they have a common covering space,
see Figure~\ref{fig:AppToT}.

We approach this through the abstract
theory of information spaces which was originally
proposed and developed in \cite{LaValle2006,Tovar2005} and extended recently 
in~\cite{Sakcak2023,WeiSakLav22}. 
In it, the robot (or more broadly, an agent), has access exclusively to the sequence of sensorimotor
interactions with the environment given by a sequence
$\eta=(u_0,y_0,u_1,y_1,\ldots,u_n,y_{n-1})$ in which $u_k$ is the
motor input at stage $k$ and $y_k$ is the sensory data at stage~$k$.
These sequences are called \emph{history information states}.  
The robot may be thought of exploring a tree of all possible
history information states ordered by end-extension.  This tree is
called the \emph{history information space}. 

\begin{figure}
  \centering
  \includegraphics[width=0.8\textwidth]{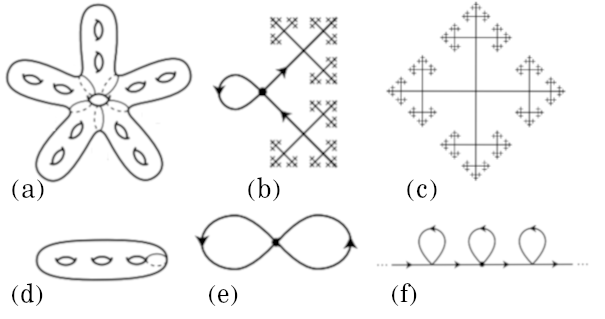}
  \caption{The spaces on the bottom are strongly indistinguishable
    from the ones above them. The 2-manifold (a) is a covering space
    of (d), the 1-complex (b) is a covering space of (e) and (f), whereas
    (c) is the universal covering space of (b), (e), and (f). For a robot, which
    senses the local homeomorphism type, (b), (c), (e), and (f)
    are mutually (not strongly) indistinguishable. 
    Figures are reproduced from~\cite{AT}.}
  \label{fig:Indistinguishable}
\end{figure}

\begin{figure}
  \centering
  \includegraphics[width=0.5\textwidth]{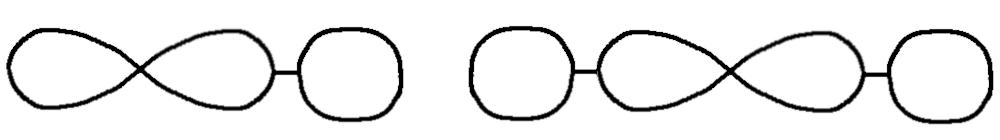}
  \caption{A robot, which senses the local homeomorphism type, can distinguish between these spaces, which
    implies that they cannot have a common covering space (see Example~\ref{ex:ApplicationToTop}).
    This constitutes an application of the present theory of mathematical robotics to topology.}
  \label{fig:AppToT}
\end{figure}

We begin by reviewing prior work on loop closure, SLAM, graph
exploration, and minimal filtering in Section~\ref{sec:Loops}.
In Section~\ref{sec:GeneralTheory} we
develop the general theory of the spaces of control signals, trajectories, and
what we call \emph{path actions}. 
We review some prior work where such
spaces have been defined
(Sections~\ref{ssec:ControlSignals}-\ref{ssec:SpaceOfTrajectories})
and strengthen some of the results of \cite{Yershov2010} in order
to motivate the more general topological definitions
in Section~\ref{sec:Topological}. Our main result in this section 
is Corollary~\ref{cor:ContinuousPathAction} which says that
the map assigning robot's trajectories to control signals
is continuous under the assumption that the mere position
of the robot is a continuous function of the control signals.
We prove this under very general topological assumptions
on the set of control signals (Definition~\ref{def:TopoU}).

Then, in Section~\ref{sec:HISS} we
apply the framework developed in \ref{sec:GeneralTheory}-\ref{sec:Topological} 
to define the notion of \emph{sensorimotor structure}
as well as the general notion of \emph{environment}. The sensorimotor structure
consists of a sensor mapping $h$ and a path action $p$ which define the coupling
between the ambient space and the robot's sensors and actuators
(Section~\ref{ssec:Life-paths}). Then, we define the continuous version of history
information state space -\ref{ssec:ContHIS}) generalizing the discrete
time version of~\cite{LaValle2006}.

We then use these tools to define a class of 
indistinguishability
equivalence relations on environments in
Section~\ref{sec:Indistinguishability} and prove a number of basic
results about them. These are interesting 
from the perspective of both robotics and topology. The most basic
one, which we also analyze the most, says that two environments are indistinguishable, if
any control signal
(no matter how long) will yield identical sensory readings in both of them
(Definitions~\ref{def:I-equiv} and~\ref{def:LifePathEq}).

In Section~\ref{sec:HomoCover} we introduce the idea of covering
spaces in this context and prove our next main results (Theorem~\ref{thm:Lift}, 
Corollary~\ref{cor:Lift}, and Theorem~\ref{thm:CoveringEquiv}) which 
formalize the idea that a covering map enables lifting the sensorimotor structure
to make the covered and covering spaces indistinguishable, or that environments
which have a common covering (or are both coverings of an) environment, then they are
also indistinguishable.

In the end of the paper we prove our final main result, Theorem~\ref{thm:Bisimulation},
giving a complete characterization of the $\equiv$-equivalence in
terms of bisimulations. Section~\ref{sec:Revisiting}
then revisits the examples of Section~\ref{sec:Loops}. Finally we
prove a general result about robots whose sensors report an isometry-invariant
of their local neighbourhood, Theorem~\ref{thm:IsometryInvariant}.

\section{Closing the Loop via Sensing and Filtering}
\label{sec:Loops}

This section reviews the well-known loop closure problem
in robotics, both in the context of engineering (Section~\ref{ssec:LCinSLAM}),
and as analyzed in the theoretical minimalist setup (Section~\ref{ssec:MinimalFiltering}).
Finally, we touch upon the related topic of graph exploration in 
Section~\ref{ssec:GraphExpl}.

\subsection{Loop closure in SLAM}
\label{ssec:LCinSLAM}

In simultaneous localization and mapping (SLAM) problems, robots are
moving in their environments while also scanning them. They may also
keep track of various non-visual parameters such as data from their
accelerometer, GPS-coordinates, wheel encoders, sonars, lasers and so
on~\cite{Tsintotas2022}. Once the robot circles back to an area where
it has been before, it is incumbent on the robot to be capable of
``meeting the ends'', that is, identifying the place as one which has
been already visited and applying this knowledge to the map
reconstruction problem. The \emph{visual loop closure} is the subcategory of
loop closure problems in which this recognition has to be made based
on visual sensors alone (possibly including 3D-sensors such as
lasers). Even when motion detection data is available, accumulated
errors may make loop closure detection difficult based on path
integration methods. This error makes the loop closure problem
in essence topological, as opposed to metric. In fact, once two
places in the internally reconstructed \emph{metric} maps have been
identified as the same, usually a new metric has to be defined on the
map to make it correct, see Figure~\ref{fig:Loop}

In many applications the visual sensors are powerful monocular or
binocular multi-megapixel cameras and laser
measurements~\cite{Chen2006,Geiger2013}. Many computational methods
have been developed to perform this highly complex task.  Methods
which work directly with data include statistical such as Bayesian
inference \cite{Chen2006}, hybrid statistical-geometric such as direct
sparse odometry~\cite{Engel18,Gao18}, graph based methods such as
local registration and global correlation \cite{Gutman99} and other
sophisticated methods such as ORB-SLAM~\cite{MurArtal15} and
graph-matching with deep learning~\cite{Duan2022}. These include scan
matching, consistent pose estimation, map correlation, clustering,
and dimension analysis.  There is also a class of methods which are based
on image recognition where the visual images are labeled using neural
networks and then the algorithms operate with those labels.

\subsection{Minimal filtering and minimal SLAM}
\label{ssec:MinimalFiltering}

At the other extreme, minimalist models which analyze the logical and
geometric underpinnings of SLAM have been proposed
\cite{Katsev2011,Taylor2013,TovLavMut03,TovMurLav07}.  In this line of
work the analysis is performed in mathematically well-defined
environments and the models assume only simple feature detectors.
The robots are often capable of building very minimalist, topological
representations of the environment
so that the required tasks are accomplished reliably and provably. 
Loop closure features in this theory too.
It can be achieved by the robot
dropping recognizable pebbles in the environment at selected
locations~\cite{TovLavMut03,TovMurLav07}. The framework of minimalist
robotics has the theoretical advantage that it enables one to reason
about the limiting cases pertaining in particular to whether loop
closure is theoretically possible or not.

Consider gap-navigation trees introduced in~\cite{TovMurLav07}. Here,
the robot is moving in a planar environment $O\subset \R^2$ and is
equipped with a sensor which only reports the directions in which the
distance-to-the-boundary function has discontinuities. These correspond to
corners and turns in the boundary of the environment. This sensor can
be thought of as detecting a topological invariant of the star-convex
environment of the robot's current location
(Figure~\ref{fig:StarConvexInvariant}). We will make this statement precise
in Example~\ref{ex:GapNavTrees}. Using this data the robot is
able to internally build a tree-like model which encodes the convex
region structure sufficiently well for the robot to be able to
optimally solve navigation tasks in a simply connected environment.
This setup is subject to the loop-closure problem. If the environment
is not simply connected, the robot may start going around a circular
obstacle, but updating the model as if it is seeing new regions ad infinitum. 
It was shown in  \cite{TovMurLav07} that their model will fail
in non-simply connected environments. We show in Example~\ref{ex:GapNavTrees}
the same thing within our new framework.
By equipping the robot with loop closure detection (using pebbles), the authors of \cite{TovMurLav07} show that then the
navigation problem can be solved in non-simply connected environments too. 

\begin{figure}
  \centering
  \includegraphics[width=0.4\textwidth]{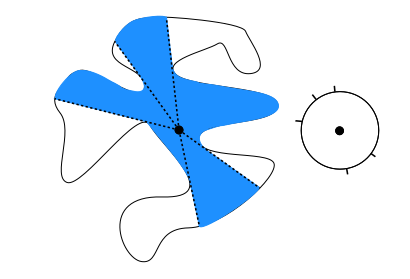}
  \caption{The visible star-convex region (left) and the outcome of
    sensor filtering (right). The filter is a topological invariant of
    the star-region as we will see in Section~\ref{sec:Revisiting}.
    Image reproduced from~\cite{TovMurLav07}}
  \label{fig:StarConvexInvariant}
\end{figure}

\begin{figure}
  \centering
  \includegraphics[width=0.4\textwidth]{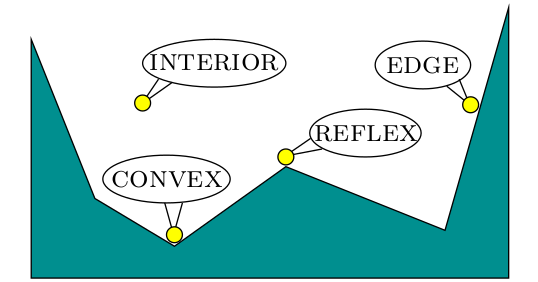}
  \caption{An example of a sensor mapping which is a
    metric invariant of the local neighbourhood of the robot.
    See Section~\ref{sec:Revisiting} for more details.
    Image reproduced from~\cite{Katsev2011}}
  \label{fig:ConvexReflex}
\end{figure}

In~\cite{Katsev2011} the authors analyze a simple robot with local
sensors that moves in an unknown polygonal environment. The robot is
capable of sensing local geometric structure of the environment: it
can detect whether or not it is on the boundary and if it is, whether
or not it is on a vertex and if it is, whether
it is a convex or a reflex (concave) vertex (Figure~\ref{fig:ConvexReflex}). The robot can
also leave pebbles in the environment which it can later detect. Using
this machinery the robot is shown to be capable of various tasks. Our
interest in this example is that here also, the sensor is a geometric
(in this case metric, not topological) invariant of the local
neighbourhood of the robot, see Section~\ref{sec:Revisiting}.

\begin{figure}
  \centering
  \includegraphics[width=0.4\textwidth]{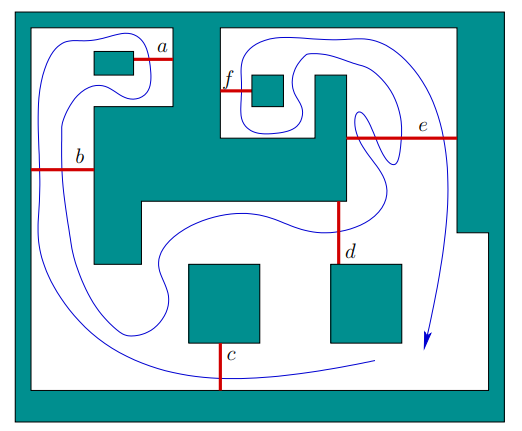}
  \caption{Navigation using only the data from the beams.  Image
    reproduced from~\cite{Tovar2009}}
  \label{fig:Beams}
\end{figure}

In \cite{Tovar2009} the authors ``analyze a problem in which an unpredictable moving body travels among obstacles and binary detection
beams. The task is to determine the possible body path based only on
the binary sensor data.  This is a basic filtering problem encountered
in many settings, which may arise from physical sensor beams or
virtual beams that are derived from other sensing modalities.'' (quote
from~\cite{Tovar2009}, see also Figure~\ref{fig:Beams}) The authors
show among other things that if the region is partitioned by the beams
into simply connected regions, then the body can know the homotopy
type of the traversed trajectory.

\subsection{Graph exploration}
\label{ssec:GraphExpl}

Graph exploration is a family of problems in the intersection of graph
theory and theoretical robotics. A subcategory of them
ask the robot to generate a map of the
environment, either a complete one or a sufficient one for its
tasks~\cite{Kuipers1988ARQ,Thrun1998}.

This set of problems must include some form of loop
closure, for otherwise a robot cannot distinguish between the 
graphs (b), (c), (e), and (f) shown on Figure~\ref{fig:Indistinguishable}.
The theory outlined in this paper 
applies to this context in the limited special case when the robot is not
allowed to make marks in the environment.

% \subsection{Loop closure in VR}

% In \emph{redirected walking} VR researchers have been studying how to
% design the perceptual space (the sensing function in our framework) in
% such a way that the subject traverses a different path than they
% think~\cite{Suma2013}. The practical reason for doing so is to enable
% large virtual environments to be realizable in smaller physical spaces. 
% One particular application is to have the
% subject walk in circles while in the VR-world they are walking along a
% straight line. This is achieved by exploiting visual illusions and
% slight turns in the virtual environment which are not consciously
% detected by the subject but are reflected in their walking behaviour.
% \begin{quote}
%   When the user rotates his head, the change in head orientation is
%   measured by the system, and a scale factor is applied to the
%   rotation in the virtual world. The net result is a gradual rotation
%   of the entire virtual world around the user's head position, which
%   in turn alters their walk direction in the real world. ...  [T]he
%   virtual world can also be rotated as the user walks in a straight
%   line in the virtual world, a manipulation known as curvature
%   gains.~\cite{Suma2013}
% \end{quote}

\section{Basic Models and Theory Development}
\label{sec:GeneralTheory}

In this section we introduce the basic setup. In Section~\ref{ssec:ControlSignals}
we introduce the space $\UU_M$ of measurable control signals, and
in Section~\ref{ssec:SpaceOfTrajectories} the space $\XX$ of continuous
trajectories that robot traverses in~$X$. We analyze, building on 
\cite{Yershov2010} the case when $X$ is a differential manifold
and define the functions $r$ and $\bar r$ which connect the control signal
to the resulting configuration and the trajectory of the robot respectively. 
Using the results of \ref{ssec:ControlSignals}
and \ref{ssec:SpaceOfTrajectories}, we then formulate appropriate generalizations 
to a topological framework in Section~\ref{sec:Topological}.
There, we define $\UU$ which is a generalization of $\UU_M$,
and the path action $p$ (and the induced $\bar p$)
which is a generalization of $r$ (respectively of~$\bar r$).
The space $X$ is merely generalized from being a differential manifold
to being a Polish space without any changes in the definition of~$\XX$.

\subsection{Space of control signals}
\label{ssec:ControlSignals}

This section will follow~\cite{Yershov2010} in defining the space $\UU$ of
control signals. Later, in Section~\ref{sec:Topological}, we will have
a bit more general definition. The robot has an arbitrary, nonempty set of 
inputs~$U$ which in this section is a topological space.  A \emph{control
  signal} is a measurable function $\bar u\colon [0,T)\to U$.  We use
half-open intervals as the domain (deviating from \cite{Yershov2010})
so that we can naturally concatenate them. Over its lifetime, the robot
computer or controller generates a signal $\bar x\colon \R_{\ge 0}\to U$, which
can be obtained as a concatenation of bounded-time signals. This will be made formal.

\begin{Def}\label{def:PathSpaceBasics}
  Let $\UU_M$ be the set of all measurable functions
  $\bar u\colon [0,T)\to U$ for $T\in \R_{\ge 0}$.  For $T=0$ this is
  the empty function $\bar u=\es$.  Denote by $T=|\bar u|$ the
  supremum of the domain of~$\bar u$. For any two elements
  $\bar u_1,\bar u_2\in \UU_M$, let $\bar u_1\oplus\bar u_2$ be the
  control signal $\bar u$ with $|\bar u|=|\bar u_1|+|\bar u_2|$
  defined by
  $$\bar u(t)=
  \begin{cases}
    \bar u_1(t)&\text{ if }t<|\bar u_1|\\
    \bar u_2(t-|\bar u_1|)&\text{ otherwise}.
  \end{cases}
  $$
  Given $\bar u\in \UU_M$, denote by $\bar u\rest [0,t)$ the restriction
  of $\bar u$ to $[0,t)$. This can be shortened to $\bar u\rest t$
  or to $\bar u_{<t}$. If $t\ge |\bar u|$, then $\bar u\rest t=\bar u$.
  Let $\bar u_1\lhd \bar u_2$ denote that $|\bar u_1|<|\bar u_2|$
  and $\bar u_2\rest|\bar u_1|=\bar u_1$. Given $\bar u\in \UU_M$ and
  $t\in \R_{\ge 0}$ we define $\bar u_{< t}$ and $\bar u_{\ge t}$ to
  be the unique two elements of $\UU_M$ with the property that
  $|\bar u_{<t}|=t$ and $\bar u = \bar u_{<t}\oplus \bar u_{\ge t}$.
\end{Def}

\begin{Prop}\label{prop:UU_Mclosedunder}
    $\UU_M$ is 
    \begin{enumerate-(1)-r} 
    \item closed under $\oplus$, meaning
    that for all $\bar u_0,\bar u_1\in\UU_M$ we have
    $\bar u_0\oplus\bar u_1\in \UU_M$,
    \item \emph{closed under segmentation}, meaning
    that for all $\bar u$ and all $t$, also $\bar u_{<t}$
    and $\bar u_{\ge t}$ are in $\UU_M$, and 
    \item \emph{extensive},
    meaning that for all $\bar u\in\UU_M$, there is $\bar u_1\in\UU_M$
    such that $\bar u\lhd \bar u_1$ and $|\bar u_1|\ge |\bar u|+1$.
    \end{enumerate-(1)-r}
\end{Prop}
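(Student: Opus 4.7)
The plan is to verify each of the three clauses in turn, all of which reduce to routine facts about measurable functions on intervals together with the fact that $U$ is nonempty.

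For (1), I take $\bar u_0\colon[0,T_0)\to U$ and $\bar u_1\colon[0,T_1)\to U$ measurable and unpack the definition of $\bar u=\bar u_0\oplus\bar u_1$. For any Borel (equivalently, measurable in whatever $\sigma$-algebra on $U$ was fixed) set $V\subseteq U$, the preimage $\bar u^{-1}(V)$ decomposes as $\bar u_0^{-1}(V)\cup\bigl(T_0+\bar u_1^{-1}(V)\bigr)$, the union of a measurable subset of $[0,T_0)$ and the translate by $T_0$ of a measurable subset of $[0,T_1)$. Since translation is a measurable self-map of $\R$, each piece is measurable in $[0,T_0+T_1)$, hence so is their union; thus $\bar u\in\UU_M$.

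For (2), the restriction $\bar u\rest[0,t)=\bar u_{<t}$ is the composition of $\bar u$ with the inclusion $[0,t)\hookrightarrow[0,|\bar u|)$, and the restriction of a measurable function to a measurable subdomain is measurable. For $\bar u_{\geq t}$, I note that $s\mapsto \bar u(t+s)$ is the composition of $\bar u$ with the measurable map $s\mapsto t+s$ on $[0,|\bar u|-t)$, so it is measurable. The identity $\bar u=\bar u_{<t}\oplus\bar u_{\geq t}$ follows directly by matching the two cases of the definition of $\oplus$; uniqueness is immediate since both factors have their lengths prescribed. (If $t\geq|\bar u|$ I use the convention that $\bar u_{\geq t}=\varnothing$.)

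For (3), I use that $U$ is nonempty: pick any $u^*\in U$ and let $c\colon[0,1)\to U$ be the constant function $c\equiv u^*$, which is trivially measurable, hence in $\UU_M$. By (1), $\bar u_1:=\bar u\oplus c\in\UU_M$, and by construction $|\bar u_1|=|\bar u|+1$ and $\bar u_1\rest|\bar u|=\bar u$, so $\bar u\lhd\bar u_1$ as required.

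No step is a serious obstacle; the only thing to handle with a bit of care is the translation argument in (1) and the corresponding one for $\bar u_{\geq t}$ in (2), which together show that the $\oplus$/segmentation operations respect measurability. Everything else is a matter of unwinding the definitions in Definition~\ref{def:PathSpaceBasics}.
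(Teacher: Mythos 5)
Your proof is correct and takes the same approach as the paper, which simply asserts that all three clauses ``follow easily from the fact that $\UU_M$ is the collection of all measurable functions''; you have just written out the routine verifications (preimage decomposition and translation-invariance of measurability for (1) and (2), a constant extension for (3)) that the paper leaves implicit. The one small point worth keeping explicit is the one you flagged: the convention $\bar u_{\ge t}=\varnothing$ when $t\ge|\bar u|$, which is needed for segmentation to be well-defined for all $t\in\R_{\ge 0}$.
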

\begin{proof}
    These all follow easily from the fact that $\UU_M$ is the collection
    of all measurable functions
\end{proof}

\begin{Remark}\label{rem:Basics}
  \begin{enumerate}
  \item As is standard in functional analysis, we actually consider equivalence classes of measurable functions that differ on a set of
    measure zero, without altering notation.
  \item The ordering $\lhd$ is a tree order on~$\UU_M$, meaning 
    $\lhd$-incompatible control signals have no common $\lhd$-extensions.
  \item We always have $\bar u_1\lhd\bar u_1\oplus\bar u_2$.
  \item Also, $(\bar u_0\oplus \bar u_1)_{<|u_0|}=u_0$ and
    $(\bar u_0\oplus \bar u_1)_{\ge |u_0|}=u_1$.
  \item $\bar u_{<t}$ is the same as the restriction $\bar u\rest [0,t)$.
  \item $\bar u_{\ge t}$ is the final segment of $\bar u$ which can be
    explicitly defined as a function with domain $[0,t_0)$ where
    $t_0=|\bar u|-t$ and for all $0\le t_1<t_0$ we have
    $\bar u_{\ge t}(t_1)= \bar u(t+t_1).$
  \end{enumerate}
\end{Remark}

Following \cite{Yershov2010} we define the $L_1$-type metric
on~$\UU_M$. 

\begin{Def}[Metric on $\UU_M$]\label{def:MetricOnU}
  Assume that $d_U$ is a metric on~$U$. Then,
  given measurable $\bar u_1\colon [0,T_1)\to U$
  and $\bar u_2\colon [0,T_2)\to U$, define
  $$\varrho_{\UU_M}(\bar u_1,\bar u_2)=\int_0^{T}d_U(\bar u_1(t),\bar u_2(t))dt+|T_1-T_2|,$$
  where $T=\min\{T_1,T_2\}$.  
\end{Def}

% For the sake of completeness we sketch the proof of the following
% which was omitted in~:

% \begin{Lemma}(\cite[Lemma~2]{Yershov2010}) If $d_U$ is a bounded
%   metric on $U$, then $\varrho_{\UU_M}$ is a metric on~$\UU_M$.
% \end{Lemma}
% \begin{proof}
%   Since $d_U$ has a bound, say $M\in \R$, we have
%   $$\varrho_{\UU_M}(\bar u_1,\bar u_2)\le \int_0^{T}Mdt+|T_1-T_2|=M\min\{T_1,T_2\}+|T_1-T_2|<\infty,$$
%   so the integral converges. If $\bar u_1=\bar u_2$, then clearly both
%   terms in the integral are zero and the result is zero. If, on the
%   other hand $\bar u_1\ne \bar u_2$, then these functions either
%   differ in a set $A\subset [0,T)$ of positive measure, or else the
%   term $|T_2-T_1|$ is positive. In either case
%   $\varrho_{\UU_M}(\bar u_1,\bar u_2)>0$.  Symmetry follows from the
%   symmetries of the metrics on $U$ and~$\R$.  The triangle inequality
%   also follows from the triangle inequalities for metrics on $U$ and
%   $\R$ together with the linearity of the integral.
% \end{proof}

A metric is called \emph{Polish} if it makes the space complete and
separable.

\begin{Prop}\label{prop:UPolish}
  If $d_U$ is bounded, then $\varrho_{\UU_M}$ is a Polish metric on~$\UU_M$.
\end{Prop}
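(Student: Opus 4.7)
The plan is to verify in turn that $\varrho_{\UU_M}$ is a metric, that it is complete, and that it is separable, under the natural assumption that $(U,d_U)$ is Polish.

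For the metric axioms, non-negativity and symmetry are immediate; $\varrho_{\UU_M}(\bar u_1,\bar u_2)=0$ forces $T_1=T_2$ from the length summand and then $\bar u_1=\bar u_2$ almost everywhere from the integral, which is equality in $\UU_M$ by the identification in Remark~\ref{rem:Basics}(1). The triangle inequality is the one place the boundedness hypothesis is used: ordering $T_1\le T_2\le T_3$ (the other orderings reduce to two essentially different cases), I would expand the three distances, apply the pointwise triangle inequality for $d_U$ on the common subinterval $[0,T_1)$, and bound the tail integrals over $[T_i,T_j)$ by $(\sup d_U)\cdot(T_j-T_i)$, which is then absorbed by the length summands $|T_i-T_j|$. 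This goes through cleanly after a harmless rescaling so that $d_U\le 1$.

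For completeness, take a Cauchy sequence $(\bar u_n)$ with lengths $T_n=|\bar u_n|$. Since $|T_n-T_m|\le \varrho_{\UU_M}(\bar u_n,\bar u_m)$, the $T_n$ converge to some $T\in\R_{\ge 0}$. Fix a basepoint $u_0\in U$ and, for $n$ large enough that $T_n\le T+1$, extend each $\bar u_n$ to $\tilde u_n\colon[0,T+1]\to U$ by $\tilde u_n(t)=u_0$ for $t\ge T_n$. Letting $M$ bound $d_U$, a case split on the relative sizes of $T_n$ and $T_m$ yields
$$\int_0^{T+1} d_U(\tilde u_n(t),\tilde u_m(t))\,dt \le \varrho_{\UU_M}(\bar u_n,\bar u_m)+M\cdot|T_n-T_m|,$$
so $(\tilde u_n)$ is Cauchy in the standard $L^1$-type metric on measurable functions $[0,T+1]\to U$. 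Classical completeness of this Bochner-type $L^1$ space into a Polish target yields a measurable limit $\tilde u$; setting $\bar u:=\tilde u\rest[0,T)$ and redoing the case split (now on $T_n\le T$ versus $T_n>T$) gives $\varrho_{\UU_M}(\bar u_n,\bar u)\to 0$.

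For separability, fix a countable dense $D\subseteq U$ and let $\mathcal S$ be the countable set of simple functions of the form $\sum_{k=0}^{N-1} c_k\mathbf{1}_{[q_k,q_{k+1})}$ with $0=q_0<\ldots<q_N$ rational and $c_k\in D$. Density of $\mathcal S$ in $\UU_M$ follows from the standard approximation of a measurable function by $D$-valued simple functions in $L^1$, after replacing $|\bar u|$ by a nearby rational and absorbing the small end-discrepancy into the length summand using boundedness. The main obstacle throughout is the variable domain length: it both complicates the triangle inequality via tail integrals and prevents a direct appeal to $L^1$-completeness on a single fixed interval. The constant-padding trick, together with boundedness of $d_U$ to control the padding error, resolves both difficulties uniformly.
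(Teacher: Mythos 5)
Your overall organization (metric axioms, completeness, separability) matches the paper's, but the paper delegates both the metric axioms and the construction of a countable dense set to~\cite{Yershov2010}, so the only step it actually spells out is completeness. You are also right to make explicit the standing assumption that $(U,d_U)$ itself is complete and separable, which the statement leaves implicit.

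Your completeness argument is a genuinely different route. The paper picks an auxiliary increasing sequence $T_n\uparrow T$ with $|\bar u_m|>T_n$ for all $m\ge n$, extracts an $L^1$-limit $\bar u^*_n$ of the restrictions to each fixed interval $[0,T_n)$, checks coherence $\bar u^*_n\lhd\bar u^*_{n+1}$, and takes the union as the limit. You instead pad every $\bar u_n$ to a function $\tilde u_n$ on the single fixed interval $[0,T+1]$ by a constant basepoint, prove $(\tilde u_n)$ Cauchy in $L^1([0,T+1];U)$ using the bound $M$ to control the padding error, extract one limit, and restrict. Both reduce to classical $L^1$-completeness over a complete target; your padding trades the bookkeeping of a nested family of intervals and a coherence check for a basepoint choice and the estimate $\int_0^{T+1}d_U(\tilde u_n,\tilde u_m)\,dt\le\varrho_{\UU_M}(\bar u_n,\bar u_m)+M|T_n-T_m|$. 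Both are correct.

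There is, however, a real gap in your treatment of the triangle inequality. Replacing $d_U$ by $d_U/M$ is not harmless: it scales the integral summand of $\varrho_{\UU_M}$ but leaves the length summand $|T_1-T_2|$ unchanged, so it changes $\varrho_{\UU_M}$ to a genuinely different function, and that function being a metric says nothing about the original. In fact the unrescaled triangle inequality can fail for merely bounded $d_U$: with $T_2<T_1\le T_3$ it reduces to $\int_{T_2}^{T_1}d_U(\bar u_1,\bar u_3)\,dt\le 2(T_1-T_2)$, which needs $d_U\le 2$. Concretely, with $U=\{a,b\}$, $d_U(a,b)=3$, $\bar u_1\equiv a$ on $[0,1)$, $\bar u_3\equiv b$ on $[0,1)$, and $\bar u_2=\es$, one gets $\varrho_{\UU_M}(\bar u_1,\bar u_3)=3$ while $\varrho_{\UU_M}(\bar u_1,\bar u_2)+\varrho_{\UU_M}(\bar u_2,\bar u_3)=2$. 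The paper sidesteps this by citing~\cite{Yershov2010}; if you want to prove the metric axioms yourself you must either explicitly assume $d_U\le 1$ (or at most $2$) rather than merely bounded, or reweight the length summand in the definition of $\varrho_{\UU_M}$.
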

\begin{proof}
  That it is a metric was already observed in~\cite{Yershov2010}.
  A dense countable set was also constructed in~\cite{Yershov2010}.
  Suppose $(\bar u_{n})$ is a Cauchy sequence. Because of the second
  term in the definition of the metric, $|\bar u_n|$ must converge to
  some~$T$. Let $(T_n)$ be an increasing sequence converging to $T$ with
  the property that for all $m\ge n$ we have $|\bar u_m|>T_n$. Then
  for each $n$, the sequence $(\bar u_{m}\rest_{<T_n})_{m\ge n}$ is a
  Cauchy sequence in the classical $L_1$-metric on the space
  of all continuous functions from $[0,T_n)$ into $U$ which
  is known to be complete.  Thus, let $\bar u^*_n$ be the limit of
  that sequence. Then, observe that $\bar u^*_n\lhd \bar u^*_{n+1}$
  for all $n$, and $\Cup_{n}\bar u^*_n$ is the limit of~$(\bar u_n)$ in~$\UU_M$.
\end{proof}

The following acts as a motivation for the generalization in Definition~\ref{def:TopoU}.

\begin{Prop}\label{lemma:ProjectionsContinuous}
  For $\bar u\in\UU_M$ and  $t\in\R_{\ge 0}$,
  the following functions are continuous:
  \begin{enumerate}
  \item[(1)] $\tau_1\colon \bar u\mapsto |\bar u|$,
  \item[(2)] $\tau_2\colon (\bar u,t)\mapsto \bar u_{<t}$.
  \end{enumerate}
\end{Prop}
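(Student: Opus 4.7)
The plan is to verify continuity directly from the definition of $\varrho_{\UU_M}$ given in Definition~\ref{def:MetricOnU}. Both maps will turn out to be Lipschitz (in appropriate senses), so no topological subtleties arise.

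For $\tau_1$, continuity is essentially built into the metric. Given $\bar u_1,\bar u_2\in\UU_M$ with lengths $T_1,T_2$, the second summand in $\varrho_{\UU_M}(\bar u_1,\bar u_2)$ is precisely $|T_1-T_2|=|\tau_1(\bar u_1)-\tau_1(\bar u_2)|$. Since the first summand is nonnegative, we get $|\tau_1(\bar u_1)-\tau_1(\bar u_2)|\le \varrho_{\UU_M}(\bar u_1,\bar u_2)$, so $\tau_1$ is $1$-Lipschitz and hence continuous.

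For $\tau_2$, I equip $\UU_M\times\R_{\ge 0}$ with the product topology (or equivalently the sum metric). Fix $(\bar u,t)$ and suppose $(\bar v,s)$ satisfies $\varrho_{\UU_M}(\bar u,\bar v)<\delta$ and $|t-s|<\delta$. Writing $|\bar u_{<t}|=\min(t,|\bar u|)$ and $|\bar v_{<s}|=\min(s,|\bar v|)$ (using the convention $\bar u\rest t=\bar u$ when $t\ge|\bar u|$), I compute
\[
\varrho_{\UU_M}(\bar u_{<t},\bar v_{<s})=\int_0^{T^*}d_U(\bar u(r),\bar v(r))\,dr+\bigl|\min(t,|\bar u|)-\min(s,|\bar v|)\bigr|,
\]
where $T^*=\min(t,s,|\bar u|,|\bar v|)$. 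For the integral, since $T^*\le\min(|\bar u|,|\bar v|)$, the integrand is the same as in $\varrho_{\UU_M}(\bar u,\bar v)$ but over a shorter interval, so this term is bounded by $\varrho_{\UU_M}(\bar u,\bar v)<\delta$. For the second term, I use the elementary estimate $|\min(a,b)-\min(c,d)|\le|a-c|+|b-d|$ to get
\[
\bigl|\min(t,|\bar u|)-\min(s,|\bar v|)\bigr|\le |t-s|+\bigl||\bar u|-|\bar v|\bigr|\le|t-s|+\varrho_{\UU_M}(\bar u,\bar v)<2\delta.
\]
Combining the two estimates yields $\varrho_{\UU_M}(\bar u_{<t},\bar v_{<s})<3\delta$, establishing continuity of~$\tau_2$.

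There is no real obstacle here; the only point to be slightly careful about is the case $t>|\bar u|$ (so that $\bar u_{<t}=\bar u$), which is handled uniformly by writing the length of the restriction as $\min(t,|\bar u|)$. The elementary min-inequality then absorbs both sources of perturbation (of $\bar u$ and of $t$) into a single Lipschitz bound.
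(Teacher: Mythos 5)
Your proof is correct and follows essentially the same route as the paper: for $\tau_1$ you observe that the second summand of $\varrho_{\UU_M}$ is exactly $|\tau_1(\bar u)-\tau_1(\bar v)|$, so the map is $1$-Lipschitz (the paper phrases this as openness of preimages of intervals, but the underlying observation is the same); for $\tau_2$ you use the sum metric on the product and split the estimate into the integral term and the length-difference term, precisely as the paper does. Your handling of the second term via the elementary bound $|\min(a,b)-\min(c,d)|\le|a-c|+|b-d|$ is clean and in fact lets you recover the paper's $1$-Lipschitz conclusion directly (your total estimate $\int_0^{T^*}d_U(\bar u,\bar v)\,dr+||\bar u|-|\bar v||+|t-s|\le\varrho_{\UU_M}(\bar u,\bar v)+|t-s|$ is the sum metric), so the $3\delta$ you state is looser than what your own computation yields, but this is cosmetic.
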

\begin{proof} (1) Suppose $s<|\bar u|<t$ and let $\e=\min\{|\bar u|-s,t-|\bar u|\}$.
    Since $|\bar u-\bar u_1|$ is bounded by $\varrho_{\UU_M}(\bar u_1,\bar u)$
    for all $\bar u_1$, if $\varrho_{\UU_M}(\bar u_1,\bar u)<\e$, then
    also $s<|\bar u_1|<t$. This shows that the inverse image of the open
    interval $(s,t)$ under $\tau_1$ is open.
    % [In the following is a proof that \tau_1 is open, but I decided to drop that statement
    % fromt he theorem. We don't need it and it is difficult to prove for \tau_3 (if even possilbe)]
    % On the other hand assume that $O\subset\UU_M$ is open and $t\in\tau_1[O]$.
    % Let $\bar u\in O$ be such that $\tau_1(\bar u)=|\bar u|=t$. Let $\e>0$
    % be such that $B_{\UU_M}(\bar u,\e)\subset O$. Let $u_1$ be a constant control signal 
    % with $|u_1|=\e$. Then, $\bar u_s=(\bar u\oplus u_1)_{|\bar u|+s\e}\in B_{\UU_M}(\bar u,\e)$
    % for all $-1<s<1$. But $\tau_1(\bar u_s)=\bar u+s\e$, so we have found a neighbourhood 
    % of $t$ in $\tau_1[O]$ showing that it is open.
    
(2) Let $\UU_M\times \R_{\ge 0}$ be equipped with the metric
    $\delta((\bar u,t),(\bar v,s))=\varrho_{\UU_M}(\bar u,\bar v)+|t-s|$. This metric is compatible
    with the product topology; thus, it suffices to show continuity of $\tau_2$ w.r.t.~$\delta$.
    Suppose $(\bar u,t),(\bar v,s)\in\UU_M\times \R_{\ge 0}$. 
    Let $T_1=\min\{|\bar u|,|\bar v|,t,s\}$,
     $T_2=\min\{|\bar u|,|\bar v|\}$,
     $T_3=\min\{|\bar u|,t\}$, and $T_4=\min\{|\bar v|,s\}$.
    Now $T_1\le T_2$, $T_3=|\bar u_{<t}|\le |\bar u|+t$,
    $T_4=|\bar v_{<s}|\le |\bar v|+s$, and so we get:
    \begin{align*}
        \varrho_{\UU_M}(\tau_2(\bar u,t),\tau_2(\bar v,s))&=\varrho_{\UU_M}(\bar u_{<t},\bar v_{<s})\\
        &=\int_0^{T_1} d_U(\bar u_{<t}(z),\bar v_{<s}(z))dz+||\bar u_{<t}|-|\bar v_{<s}||\\
        &=\int_0^{T_1} d_U(\bar u(z),\bar v(z))dz+|T_3-T_4|\\
        &\le\int_0^{T_2} d_U(\bar u(z),\bar v(z))dz+||\bar u|-|\bar v|+t-s|\\
        &\le\int_0^{T_2} d_U(\bar u(z),\bar v(z))dz+||\bar u|-|\bar v||+|t-s|\\
        &=\varrho_{\UU_M}(\bar u,\bar v)+|t-s|\\
        &=\delta((\bar u,t),(\bar v,s)),
    \end{align*}
    which shows that $\tau_2$ is in fact $1$-Lipschitz and therefore continuous.
\end{proof}

\subsection{Space of trajectories and path actions given by differential structures}
\label{ssec:SpaceOfTrajectories}

We first clarify our notion of the state space
and what is its relationship to the ambient space in which the robot
is, and to the notion of an environment defined in Section~\ref{ssec:Life-paths}.
A robot, as a body occupying physical space, can be in various
configurations. The robot is then embedded in some ambient space
which both restricts and extends this set. On the one hand, not only
can the body be in some configuration, but it can also be in different
locations and have different orientations in the ambient space. On the other hand, the ambient space
may restrict the range of possible configurations that the robot's body can achieve.
This gives rise to the state space~$X$. 
For example, consider a car-like robot with four wheels and front steering.
Its body's configuration space is a subset of $(S^1)^{4} \times \R$.
Once embedded in an ambient 3D world $\R^3$ and restricting the car to contact a planar surface, the space of robot's possible
states
$X$ becomes a subset of $(S^1)^4 \times \R \times \R^2 \times S^1$. The third and fourth factors account for the car's position and orientation, respective, in the plane.
Thus, the \emph{ambient space} is the theoretical space in which the robot
is and we do not refer to it in our theory. 
Often, $x \in X$ may also encode configuration velocities and other environmental particulars.
We assume
in this paper that $X$ is a metric space. We obtain the notion of an \emph{environment}
in the next section by equipping $X$ with a sensor mapping and a path action.

A control signal $\bar u\colon [0,T)\to U$ influences the robot's
{\em state} in the state space~$X$.   If $X$ is a smooth manifold,
as in many applications, we can consider a parameterized vector field
$f\colon X\times U\to TX$, where $TX$ is the tangent bundle such that
$f(x,u)\in T_xX$. Then, each $\bar u$, given an initial point
$x_0\in X$, yields a trajectory $\bar x\colon [0,T]\to X$ is
the integral curve satisfying the following:%
% \footnote{If $U$ is finite, this is also known as a \emph{hybrid
%     dynamical system} or \emph{hybrid
%     transition system}~\protect{\cite{henzinger1995s}}.}
$$\bar x(0)=x_0\quad \bar x'(t)=f(\bar u,x(t))\quad\text{ for all }t\in [0,T]$$
If $f$ is continuous, then $\bar x$
will also be continuous. The existence and uniqueness of $\bar x$ 
for locally Lipschitz $f$ follows from the Picard-Lindelöf theorem. 
Thus, the range of this map $\bar u\mapsto \bar x$ is 
included in the space of
all continuous paths $\bar x\colon [0,T]\to X$. Motivated by this, we define:
\begin{equation}
  \label{eq:defofXX}
  \XX=\Cup_{T\in\R_{\ge 0}}C([0,T],X). 
\end{equation}

Again following
\cite{Yershov2010}, and assuming that $X$ is
equipped with the metric $d_X$, we can equip $\XX$ with the metric
\begin{equation}
  \label{eq:RhoXX}
  \varrho_{\XX}(\bar x_1,\bar x_2)=\sup\{d_X(\bar x_1(t),\bar x_2(t))\mid t\in [0,T]\}+|T_2-T_1|,
\end{equation}
where $T=\min\{T_1,T_2\}$. As with $\UU_M$, we show that $\XX$ is a Polish
space.

\begin{Prop}\label{prop:XPolish}
  If $d_X$ is a Polish metric on $X$, then  $\varrho_\XX$ is a
  Polish metric on~$\XX$.
\end{Prop}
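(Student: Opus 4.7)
The plan is to adapt the proof of Proposition~\ref{prop:UPolish} by replacing the $L_1$-metric with the uniform metric on each slice $C([0,T],X)$. Three things need to be checked in turn: that $\varrho_\XX$ is a metric, that $(\XX,\varrho_\XX)$ is separable, and that it is complete. Symmetry and non-negativity are immediate. For the triangle inequality, a case analysis on the relative sizes of $T_i=|\bar x_i|$ reduces everything to the pointwise triangle inequality for $d_X$ on the common domain $[0,\min\{T_1,T_2,T_3\}]$, with the length terms $|T_i-T_j|$ absorbing any discrepancies arising from the differing domains.

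For separability, for each rational $q\ge 0$ the space $C([0,q],X)$ equipped with the sup metric is separable because $X$ is (this is the classical fact that $C(K,X)$ is separable when $K$ is compact and $X$ is separable). The union over rational $q$ of a countable dense subset of each such $C([0,q],X)$ yields a countable $\varrho_\XX$-dense subset of $\XX$, since both the length and the path itself can be simultaneously approximated, using the length term of the metric.

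Completeness is the heart of the argument and the step I expect to be most delicate. Given a Cauchy sequence $(\bar x_n)$, the length term forces $(|\bar x_n|)$ to be Cauchy in $\R$ and hence convergent to some $T\ge 0$. Choosing an increasing sequence $T_n\to T$ with $|\bar x_m|\ge T_n$ for $m\ge n$, the restrictions $\bar x_m\rest[0,T_n]$ form a Cauchy sequence in $(C([0,T_n],X),\sup)$, which is a complete metric space when $d_X$ is, yielding a uniform limit $\bar x^*_n\colon[0,T_n]\to X$. The $\bar x^*_n$ are pairwise compatible and glue together into a continuous $\bar x^*\colon[0,T)\to X$. The main obstacle is extending $\bar x^*$ continuously to the endpoint $t=T$ in the case when $|\bar x_n|<T$ for all $n$; this I plan to resolve by verifying that $(\bar x_n(|\bar x_n|))$ is a Cauchy sequence in $X$. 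The Cauchy property of $(\bar x_n)$ applied at the time $|\bar x_n|$ bounds $d_X(\bar x_n(|\bar x_n|),\bar x_m(|\bar x_n|))$ directly, and a further argument ruling out ``late spikes'' of the $\bar x_m$ near $T$ (again using the sup part of the Cauchy property) gives that $\bar x_m(|\bar x_n|)$ stays close to $\bar x_m(|\bar x_m|)$ when $n,m$ are large. Setting $\bar x^*(T)$ to be this common limit produces the required element of $\XX$, and a short verification then shows that $\bar x^*$ is indeed the $\varrho_\XX$-limit of $(\bar x_n)$.
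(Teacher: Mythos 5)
Your completeness argument breaks down at the endpoint-extension step, and in fact the obstacle cannot be surmounted: $(\XX,\varrho_\XX)$ is not complete. Take $X=[0,1]$, $T_n=1-1/n$, and build a nested tower $\bar x_1,\bar x_2,\ldots$ with $\bar x_n\in C([0,T_n],X)$, each $\bar x_{n+1}$ extending $\bar x_n$, and $\bar x_n(T_n)$ alternating between $0$ and $1$ (extend affinely on $[T_n,T_{n+1}]$). The sup term of $\varrho_\XX(\bar x_n,\bar x_m)$ vanishes whenever $\bar x_m$ extends $\bar x_n$, so $\varrho_\XX(\bar x_n,\bar x_m)=|T_m-T_n|\to 0$ and the sequence is Cauchy; yet any limit would have to agree with every $\bar x_n$ on $[0,1)$ and therefore oscillate with no continuous extension to $t=1$. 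This is exactly the ``late spike'' you propose to exclude, and it is invisible to the sup part of $\varrho_\XX$ because that sup ranges only over the shorter of the two domains, which never sees the segment $(T_n,T_m]$. The paper's own proof inherits the same gap: it adapts Proposition~\ref{prop:UPolish}, where the half-open domains $[0,T)$ make the union of the $\bar u^*_n$ automatically an element of $\UU_M$; with the closed domains $[0,T]$ of $\XX$, the analogous union lives only on $[0,T)$, and the continuous extension to $t=T$ that would be needed to land in $\XX$ need not exist.

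Your triangle-inequality sketch also overreaches: the length terms cannot absorb a sup-sized discrepancy arising from the missing segment. With $X=[0,1]$ and $\delta<1/2$, take $\bar x_2\in C([0,0],X)$ constant $0$, and $\bar x_1(t)=t/\delta$, $\bar x_3\equiv 0$ on $[0,\delta]$; then $\varrho_\XX(\bar x_1,\bar x_2)=\varrho_\XX(\bar x_2,\bar x_3)=\delta$ while $\varrho_\XX(\bar x_1,\bar x_3)=1$. Contrast this with the $L_1$ version $\varrho_{\UU_M}$: there the deficit accrued on $[T_2,\min\{T_1,T_3\}]$ is at most $(\sup d_U)\cdot(\min\{T_1,T_3\}-T_2)$, which is covered by the slack $2(\min\{T_1,T_3\}-T_2)$ in the length terms provided $d_U$ is suitably bounded --- this is where the boundedness hypothesis of Proposition~\ref{prop:UPolish} does real work --- but a sup over the missing segment admits no such bound, bounded $d_X$ or not. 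Your separability argument is sound, but the remaining two steps cannot be completed as sketched, and indeed the proposition as stated does not hold.
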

\begin{proof}
  The argument for completeness is similar to that of the proof of
  Proposition~\ref{prop:UPolish}. The only difference is that instead
  of the $L_1$-metric we use the sup-metric which is also known to be
  complete (in fact Polish) in this case~\cite[(4.19)]{Kechris1995}. To find
  a countable dense set, again use the fact that
  $\XX_{T}=\{\bar x\in\XX:|\bar x|=T\}$ is Polish for all $T$ and let
  $$D=\Cup_{T\in\Q_+}D_T,$$
  where $D_T$ is a dense countable set of~$\XX_T$ and $T$ ranges
  over positive rationals. As a countable
  union of countable sets $D$ is countable and is easily seen to be
  dense in~$\XX$.
\end{proof}

Denote by $\bar r\colon\UU_M\times X\to\XX$ the map that takes $(\bar u,x)$ to the
corresponding trajectory $\bar x$,
\begin{equation}
  \label{eq:defofbarp}
  \bar r\colon (\bar u,x)\mapsto \bar x .
\end{equation}
Let $r\colon\UU_M\times X\to X$ be the map
$r(\bar u,x)=\bar r(\bar u,x)(|\bar r(\bar u,x)|)$.
It was shown in \cite{Yershov2010} that if
$f$ is Lipschitz and $X$ is a subspace of $\R^n$, then $\bar r$
is continuous. 
We prove a slight strengthening of that:

\begin{Prop}\label{prop:pCont}
  If $f$ is uniformly continuous and $X\subset\R^n$, then both $\bar r$
  and $r$ are continuous.
\end{Prop}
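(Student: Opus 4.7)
The plan is to prove continuity of $\bar r$ via a compactness / Arzelà--Ascoli argument, and then deduce continuity of $r$ as the composition of $\bar r$ with an endpoint-evaluation map. Fix a convergent sequence $(\bar u_n, x_n) \to (\bar u, x)$ in $\UU_M \times X$ and let $\bar x_n = \bar r(\bar u_n, x_n)$, $\bar x = \bar r(\bar u, x)$, with horizons $T_n \to T$ by Proposition~\ref{lemma:ProjectionsContinuous}. The definition of $\varrho_{\XX}$ already absorbs $|T_n - T|$, so after truncating it suffices to show uniform convergence on the common interval $[0, T']$ where $T' = \min\{T_n, T\}$.

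The next step is to establish equi-boundedness and equi-continuity of $\{\bar x_n\}$. A uniformly continuous $f$ on $X \times U$ has at most linear growth in its $X$-variable (telescoping the modulus $\omega$ of $f$ over unit-length chunks in $\R^n$, uniformly in the second coordinate); feeding this bound into the integral identity $\bar x_n(t) = x_n + \int_0^t f(\bar x_n(s), \bar u_n(s))\,ds$ and applying the classical Gronwall inequality confines all $\bar x_n$ to a common compact set $K \subset \R^n$ for $t \in [0, T+1]$. On $K$ the values $f(\bar x_n(s), \bar u_n(s))$ are uniformly bounded, so the $\bar x_n$ are equi-Lipschitz; Arzelà--Ascoli then yields a uniformly convergent subsequence $\bar x_{n_k} \to \bar y$ in $C([0, T'], \R^n)$.

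It remains to identify $\bar y$ with $\bar x$, which I would do by passing to the limit in the integral equation. Replacing $\omega$ by its concave envelope (which retains the modulus-of-continuity property), subadditivity handles the $X$-coordinate error while Jensen's inequality together with $\bar u_{n_k} \to \bar u$ in $L^1$ gives $\int_0^{T'} \omega(d_U(\bar u_{n_k}(s), \bar u(s)))\,ds \to 0$; combined with uniform convergence $\bar x_{n_k} \to \bar y$ on the compact set $K$, this yields the limit identity $\bar y(t) = x + \int_0^t f(\bar y(s), \bar u(s))\,ds$, which is the integral form of the IVP defining $\bar x$. Invoking the uniqueness that is implicit in $\bar r$ being a well-defined function, $\bar y = \bar x$; since every subsequential limit of $\{\bar x_n\}$ coincides with $\bar x$ by the same argument, the whole sequence converges, and $\bar r$ is continuous. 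Continuity of $r$ then follows by composing $\bar r$ with the endpoint-evaluation map $\bar x \mapsto \bar x(|\bar x|)$ on $\XX$, whose continuity is a routine check modeled on Proposition~\ref{lemma:ProjectionsContinuous}. The main obstacle is exactly the gap between uniform continuity and Lipschitz continuity: without Lipschitz there is no direct Gronwall bound on the error $|\bar x_n - \bar x|$, so the passage to the limit is forced to go through compactness plus a uniqueness assumption rather than a quantitative contraction estimate.
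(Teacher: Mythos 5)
Your route is genuinely different from the paper's. The paper argues directly: it writes both trajectories in integral form, bounds $\varrho_{\XX}(\bar x_1,\bar x_2)$ by $|x_1-x_2|+\sup_t\int_0^t|f(\bar x_1,\bar u_1)-f(\bar x_2,\bar u_2)|\,ds+|T_1-T_2|$, and then invokes uniform continuity of $f$ to declare the integrand smaller than $\e/(3T)$ once the data are within $\delta$. As written this is a Gronwall argument with the Gronwall step omitted: it treats $d_X(\bar x_1(s),\bar x_2(s))$ --- the very quantity being estimated --- as already small, and it treats $L^1$-closeness of $\bar u_1,\bar u_2$ in $\varrho_{\UU_M}$ as if it implied pointwise closeness in $d_U$; neither follows from the hypotheses. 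Your Arzel\`a--Ascoli route is the standard repair when one has only uniform continuity: a priori compact range, uniformly convergent subsequence, passage to the limit in the integral equation, identification of the limit via uniqueness, then the subsequence trick. You correctly flag that uniqueness is only implicit (it is what makes $\bar r$ single-valued, while the paper's Picard--Lindel\"of reference covers only the locally Lipschitz case), and you correctly identify the absence of a Lipschitz constant as the core obstacle. The tradeoff is that a repaired version of the paper's estimate (Lipschitz $f$, closed with Gronwall) would give a quantitative modulus of continuity for $\bar r$, whereas your compactness argument gives only qualitative continuity --- but yours is the one that actually fits the stated hypothesis. One step you should make explicit: the linear-growth-from-modulus bound needs $u\mapsto f(x_0,u)$ to be bounded for some fixed $x_0$ (so the telescoping estimate is uniform in the second coordinate); this does not come for free from uniform continuity alone, though it sits well with the paper's standing assumption that $d_U$ is bounded and is in any case already needed for $\bar r$ to be defined on all of $[0,T]$ without blow-up.
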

\begin{Remark}
    We will see in Corollary~\ref{cor:ContinuousPathAction} that under very
    general conditions the continuity of $r$ implies the continuity of~$\bar r$.
\end{Remark}
\begin{proof}
  We prove the continuity of $\bar r$. The continuity of $r$ will then
  follow from the continuity of the projection map $\bar x\mapsto \bar x(|\bar x|)$.
  Fix $(\bar u_1,x_1)$ and $(\bar u_2,x_2)$ arbitrarily. Let $T_1=|\bar u_1|$, $T_2=|\bar u_2|$
  and $T=\min\{T_1,T_2\}$.
  Let $\e>0$ and $\delta_1$ be chosen such that $d(f(x_2,u_2),f(x_1,u_1))<\e/(3T)$
  whenever $d_X(x_1,x_2)+d_U(u_1,u_2)<\delta_1$, which exists by the uniform
  continuity of~$f$. Let $\delta=\min\{\e,\delta_1\}/3$.
  Suppose now
  $\varrho_{\UU_M}(\bar u_1,\bar u_2)+|x_2-x_1|<\delta$
  and let $\bar x_1=r(\bar u_1,x_1)$, $\bar x_2=r(\bar u_2,x_2)$. 
  Now,
  \begin{align*}
    &\varrho_{\XX}(\bar x_1,\bar x_2)\\
    &\le\sup\Big\{|x_1-x_0|+\int_0^t \big|f(\bar x_1(t),\bar u_1(t))-f(\bar x_2(t),\bar u_2(t))\big|dt \mid t\in [0,T]\Big\}+|T_2-T_1|\\
    &\le\sup\Big\{\int_0^t \e/(3T) dt \mid t\in [0,T]\Big\}+|x_2-x_1|+|T_2-T_2|\\
    &=\sup\big\{t\e/(3T)  \mid t\in [0,T]\big\}+|x_2-x_1|+|T_2-T_2|\\
    &=\e/3+\delta+\delta\\
    &\le\e/3+\e/3+\e/3\\
    &=\e,
  \end{align*}
  which proves that $r$ is continuous at the arbitrary point~$(\bar u_1,x_1)$.
\end{proof}

\subsection{Topological versions}
\label{sec:Topological}

The purpose of this
section is to free ourselves from differential calculus and enable
more flexible usage of topological machinery. In Definition~\ref{def:PathSpaceBasics}
we required that $U$ is a topological space, because we had to talk about
measurable functions with range~$U$. In the below definition we
do not need to equip $U$ with a topology, although usually in most applications
it has a natural topology that comes with it. 

\begin{Def}\label{def:TopoU}
  Suppose $U$ is any set and let $U^*$ be the set of
  \emph{all} functions $\bar u\colon [0,T)\to U$. Define $|\bar u|$,
  $\lhd$ and $\oplus$ the same way as in
  Definition~\ref{def:PathSpaceBasics}. We define
  $\UU$ to be any subset of $U^*$ which satisfies (2) and (3) of
  Proposition~\ref{prop:UU_Mclosedunder}, that is, 
  \emph{closed under
    segmentation} and \emph{extensive},
    and  that it is equipped with a topology satisfying
  Lemma~\ref{lemma:ProjectionsContinuous}, that is, the projection
  maps $\tau_1\colon \bar u\mapsto |\bar u|$ and $\tau_2\colon (\bar u,t)\mapsto \bar u_{<t}$ are continuous.
\end{Def}

We use Propositions~\ref{prop:UU_Mclosedunder} and~\ref{lemma:ProjectionsContinuous} 
to justify
Definition~\ref{def:TopoU} as a proper generalization of $\UU_M$ of the
previous section. We will use the space $\XX$ the way we already defined
it, see~\eqref{eq:defofXX} and~\eqref{eq:RhoXX}, assuming $X$ is a Polish metric space
(we no longer assume that it is a manifold). 

% [It was too hard for me to spontaneously prove the following so commented out]
% We give
% one more fact to justify its robustness in a topological sense
% (see~\cite{Kechris1995} for a methodology to prove it):

% \begin{Lemma}
%   The Polish (by Proposition~\ref{prop:XPolish}) topology on $\XX$
%   generated by $\varrho_{\XX}$ is independent on the choice of the Polish
%   metric $d_X$ on $X$ as long as $d_X$ generates the same topology
%   on~$X$. 
% \end{Lemma}
% \begin{proof}[Sketch of a proof]
%     The topology on $C([0,T],X)$ given by the $\sup$-metric is independent on 
%     $d_X$ as long as $d_X$ generates the same Polish topology on~$X$~\cite[§4.E]{Kechris1995}.
%     But the topology on $\XX$ given by $\rho_\XX$ is determined by the topology
%     on each $C([0,T],X)$, $T\in\R_{\ge 0}$, by the observation that it 
%     is generated by the open sets of the form
        
%     is the finest
%     topology such that all the inclusions $C([0,T],X)\hookrightarrow \XX$
%     as well as the map $\XX\to \R$ defined by $\bar x\mapsto |\bar x|$ are all continuous.
% \end{proof}

Finally, here is the definition of path action which is independent
of a differential, or other non-topological, structure on~$X$ or~$U$:

\begin{Def}\label{def:path-action}
  A \emph{path action} of $\UU$ on $X$ is a
  continuous function
  $p\colon \UU\times X\to X$ such that
  \begin{description}
  \item[(PA1)] $p(\es,x)=x$ for all $x\in X$, and
  \item[(PA2)] $p(\bar u,x)=p(\bar u_{\ge t},p(\bar u_{<t},x))$
    for all $x\in X$, all $\bar u\in \UU$ and all $t\in\R_{\ge 0}$.
  \end{description}
\end{Def}

\begin{Remark}\label{remark:ClosedUnderOplus}
  Perhaps a more natural definition of an action would to have the following clause instead
  of $(PA2)$:
  \begin{description}
  \item[(PA2$^\prime$)] $p(\bar u_0\oplus \bar u_1,x)=p(\bar u_1,p(\bar u_{0},x))$
    for all $x\in X$, all $\bar u_0,\bar u_1\in \UU$.
  \end{description}
  This is equivalent to (PA2) when $\UU$ is closed under $\oplus$.
  The space $\UU_M$ is such (Proposition~\ref{prop:UU_Mclosedunder}).
  Suppose, however, one wanted to consider $\UU_C\subset \UU_M$
  which consists only of continuous paths. Then, $\UU_C$ is not closed
  under $\oplus$: if
  $\lim_{t\to |\bar u_0|}\bar u_0(t)\ne \bar u_1(0)$, then
  $\bar u_0\oplus \bar u_1$ is not continuous. In this case, clause $(PA2)$ comes in handy as it serves
  the same role as $(PA2')$ but does not require closure
  under~$\oplus$.  It \emph{does}, however, require that the space is
  closed under segmentation (Definition~\ref{def:TopoU}).
  The choice between $(PA2)$ and $(PA2')$ will not be important in
  this paper until Theorem~\ref{thm:Bisimulation}.
\end{Remark}

Proposition~\ref{prop:pCont} justifies the assumption of continuity in Definition
\ref{def:path-action}. Here $p$ corresponds to $r$. If we use Proposition~\ref{prop:pCont}
as a justification for the generalization, the reader may wonder why
we did not additionally assume the continuity of the induced map into trajectories defined by
\begin{equation}
  \label{eq:defofbarpgeneral}
  \bar p(\bar u,x)\colon [0,|\bar u|]\to X\qquad \bar p(\bar u,x)(t)=p(\bar u_{<t},x)\qquad 
 0\le t\le |\bar u|.
\end{equation}
This is because continuity is implied by Corollary~\ref{cor:ContinuousPathAction} below.

\begin{Lemma}\label{lemma:supCont}
    Let $Z$ be a topological space and let $f\colon Z\times [0,T]\to \R$
    and $h\colon Z\to [0,T]$
    be continuous. Then, the function $g\colon Z\to \R$ defined by 
    $$g(z)=\sup\{f(z,t)\mid t\in [0,h(z)]\}$$
    is continuous.
\end{Lemma}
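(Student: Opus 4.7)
Because $[0,h(z)]$ is compact and $f(z,\cdot)$ is continuous, the supremum defining $g(z)$ is attained, so we may write $g(z)=\max\{f(z,t)\mid t\in[0,h(z)]\}$. My plan is to show separately that $g$ is lower semicontinuous and upper semicontinuous at an arbitrary point $z_0\in Z$, and then combine the two.

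\textbf{Lower semicontinuity.} Pick $t_0\in[0,h(z_0)]$ attaining the maximum, so $g(z_0)=f(z_0,t_0)$. Define $\sigma\colon Z\to[0,T]$ by $\sigma(z)=\min\{t_0,h(z)\}$. Since $h$ is continuous, so is $\sigma$, and $\sigma(z_0)=t_0$. Moreover $\sigma(z)\in[0,h(z)]$ for every $z$, so $g(z)\ge f(z,\sigma(z))$. The map $z\mapsto f(z,\sigma(z))$ is continuous as a composition of continuous maps and equals $g(z_0)$ at $z_0$, which gives $\liminf_{z\to z_0}g(z)\ge g(z_0)$.

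\textbf{Upper semicontinuity.} Fix $\e>0$ and set $K=[0,h(z_0)]$. For each $t\in K$, the continuity of $f$ at $(z_0,t)$ produces an open neighborhood $V_t\ni z_0$ and an open $W_t\subseteq[0,T]$ containing $t$ such that $f(z,s)<f(z_0,t)+\e\le g(z_0)+\e$ for all $(z,s)\in V_t\times W_t$. By compactness of $K$, finitely many $W_{t_1},\dots,W_{t_n}$ cover $K$; their union $W$ is an open neighborhood of the compact set $K$ in $[0,T]$, so there exists $\delta>0$ with $\{s\in[0,T]:\operatorname{dist}(s,K)<\delta\}\subseteq W$. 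Let $V'=V_{t_1}\cap\cdots\cap V_{t_n}$, and shrink using continuity of $h$ to an open $V\subseteq V'$ with $h(z)<h(z_0)+\delta$ for $z\in V$. Then for $z\in V$ and $s\in[0,h(z)]$, either $s\le h(z_0)$ (so $s\in K\subseteq W$) or $h(z_0)<s<h(z_0)+\delta$ (so again $s\in W$); in both cases $s\in W_{t_i}$ for some $i$ and $z\in V_{t_i}$, hence $f(z,s)<g(z_0)+\e$. Taking the supremum over $s\in[0,h(z)]$ gives $g(z)\le g(z_0)+\e$, and this establishes $\limsup_{z\to z_0}g(z)\le g(z_0)$.

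\textbf{Main obstacle.} The delicate point is the upper-semicontinuity step, because the domain of the sup expands with $h(z)$, so one cannot simply restrict to the compact set $K=[0,h(z_0)]$ and invoke the usual ``continuous function of a parameter over a compact set'' argument. The trick is to enlarge $K$ to an open tube $W$ of width $\delta$ on which $f(z,\cdot)$ is still uniformly bounded by $g(z_0)+\e$ for $z$ near $z_0$, and then use continuity of $h$ to keep $[0,h(z)]$ inside $W$. Combining lower and upper semicontinuity yields continuity of $g$ at $z_0$, and since $z_0$ was arbitrary, $g$ is continuous on $Z$.
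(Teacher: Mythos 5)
Your proof is correct, and structurally it is the same as the paper's: both reduce continuity to lower and upper semicontinuity (the paper phrases this as showing $g^{-1}(a,b)=E\cap A$ is open by showing $E$ and $A$ are separately open), and both invoke compactness of $[0,h(z_0)]$ for the harder direction. The implementations differ in instructive ways. For lower semicontinuity you introduce the continuous selection $\sigma(z)=\min\{t_0,h(z)\}$ and use $g(z)\ge f(z,\sigma(z))$, whereas the paper rewrites $E$ as the projection of the open set $\{(z,t)\mid t<h(z)\}\cap f^{-1}(a,\infty)$; your version works uniformly even when $h(z_0)=0$, where the paper's replacement of ``$\le$'' by ``$<$'' is not literally an equality of sets. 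The more substantive difference is in the upper-semicontinuity step: after covering $K=[0,h(z_0)]$ by finitely many rectangles, you enlarge the cover to a $\delta$-tube $W$ around $K$ and then shrink the $z$-neighborhood using continuity of $h$ so that $[0,h(z)]\subseteq W$ for $z$ near $z_0$. The paper's argument, as written, verifies $f(z,t)<b$ only for $t\in[0,h(z_0)]$, but membership in $A$ requires this for all $t\in[0,h(z)]$, which can exceed $h(z_0)$; your tube-and-shrink step supplies exactly the missing control on the upper endpoint, so your write-up is, if anything, the more complete of the two.
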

\begin{proof}
    Let $a<b$ be real numbers. We will show that $g^{-1}(a,b)$ is open in~$Z$.
    Let 
    \begin{align*}
      E&=\{z\mid \exists t\in [0,T](t\le h(z)\land f(z,t)>a)\},\text{ and}\\
      A&=\{z\mid \forall t\in [0,T](t\le h(z)\rightarrow f(z,t)<b)\}.
    \end{align*}
    By the continuity of $h$ and $f$, we can replace
    ``$\le$'' by ``$<$'' in the definition of $E$, so
    $$E=\{z\mid \exists t\in [0,T](t< h(z)\land f(z,t)>a)\}.$$
    We now have $g^{-1}(a,b)=E\cap A$; thus, it suffices to show that $E$ and $A$ are open.
    The set $E$ is the projection of $\{(z,t)\mid t<h(z)\}\cap f^{-1}(a,\infty)$ which 
    is open by the continuity of $h$ and~$f$.
    Thus, it remains to show that $A$ is open. 
    Let $z_0\in A$. We will find an open neighborhood
    $O$ of $z_0$ with $O\subset A$.
    Let $O^b=f^{-1}(-\infty,b)$. Since it is open by the continuity of $f$, for each $t\in [0,h(z_0)]$
    we have $(z_0,t)\in O^b$ and it has a rectangular open neighbourhood $O^t\times I^t$ in $O^b$:
    \begin{equation}
        \label{eq:OtIt}
        (z_0,t)\in O^t\times I^t\subset O^b.
    \end{equation}
    By compactness, find $t_0,\dots,t_n$ such that $I^{t_0},\dots,I^{t_n}$ cover $[0,h(z_0)]$.
    Let $O=O^{t_0}\cap\cdots\cap O^{t_n}$. Clearly, $z_0\in O$. It remains to show
    that $O\subset A$. So let $z\in O$ and $t\in[0,h(z_0)]$ be arbitrary.
    Let $k$ be such that $t\in I^{t_k}$. However, then 
    $(z,t)\in O^{t_k}\times I^{t_k}{\subset}O^b$ by \eqref{eq:OtIt};
    thus, $f(z,t)<b$ by the definition of~$O^b$.
\end{proof}

Below, let $X^*$ be the set of all functions $[0,T]\to X$, $T\in\R_{\ge 0}$.

\begin{Prop}\label{prop:ContinuousGen}
  Let $\UU$ be as in Definition~\ref{def:TopoU}, $Z$
  any topological space, $(X,d_X)$ a Polish metric space,
  and $\XX$ as in~\eqref{eq:defofXX} with metric as in \eqref{eq:RhoXX}.
  Suppose $p\colon\UU\times Z\to X$ is continuous and define
  the function $\bar p\colon\UU\times Z\to X^*$ as in \eqref{eq:defofbarpgeneral}, by
  \begin{equation}
  \label{eq:defofbarpverygeneral}
    \bar p(\bar u,z)\colon [0,|\bar u|]\to X\qquad \bar p(\bar u,x)(t)=p(\bar u_{<t},x)\qquad 
   0\le t\le |\bar u|.
  \end{equation}
  Then, the range of $\bar p$ is a subset of $\XX$, and $\bar p$ is continuous.
\end{Prop}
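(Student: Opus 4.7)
The plan is to handle the two claims of Proposition~\ref{prop:ContinuousGen} in sequence, with the first being a short warm-up and the main work going into the second via Lemma~\ref{lemma:supCont}. For the first claim, that $\bar p(\bar u, z) \in \XX$, I would show that $t \mapsto p(\bar u_{<t}, z)$ is continuous on $[0, |\bar u|]$: the map $t \mapsto \bar u_{<t}$ is continuous by the $\tau_2$ clause of Definition~\ref{def:TopoU}, and composing with the continuous $p(\cdot, z)$ gives continuity in $t$.

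For the continuity of $\bar p$ itself, fix a point $(\bar u_0, z_0) \in \UU \times Z$ and set $T_0 = |\bar u_0|$. The key observation is that, using the convention $\bar u_{<t} = \bar u$ when $t \ge |\bar u|$, the function $\psi_0(t) = p((\bar u_0)_{<t}, z_0)$ is defined on all of $\R_{\ge 0}$ and is continuous there (it is constant on $[T_0, \infty)$). I would then restrict to the open neighborhood $N = \tau_1^{-1}([0, T_0 + 1))$ of $\bar u_0$ and define $h \colon N \times Z \to [0, T_0]$ by $h(\bar u, z) = \min(T_0, |\bar u|)$ together with $f \colon (N \times Z) \times [0, T_0 + 1] \to \R$ by $f(\bar u, z, t) = d_X(p(\bar u_{<t}, z), \psi_0(t))$. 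Continuity of $h$ is immediate from $\tau_1$, and continuity of $f$ follows from $\tau_2$, the continuity of $p$, the continuity of $\psi_0$ established above, and the continuity of $d_X$.

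Applying Lemma~\ref{lemma:supCont} to these $f$ and $h$ (with the role of $Z$ there played by $N \times Z$), the function $g(\bar u, z) = \sup\{f(\bar u, z, t) : t \in [0, h(\bar u, z)]\}$ is continuous on $N \times Z$. Since $h(\bar u, z) = \min(|\bar u|, |\bar u_0|)$ and $\psi_0$ agrees with $\bar p(\bar u_0, z_0)$ on $[0, T_0]$, the quantity $g(\bar u, z)$ equals precisely the sup-part of $\varrho_\XX(\bar p(\bar u, z), \bar p(\bar u_0, z_0))$, while the length-part $||\bar u| - T_0|$ is continuous in $\bar u$ by $\tau_1$. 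Both summands vanish at $(\bar u_0, z_0)$, so $\bar p$ is continuous at this arbitrary point.

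The main obstacle is reconciling the varying domain $[0, |\bar u|]$ of $\bar p(\bar u, z)$ with the fixed-interval hypothesis of Lemma~\ref{lemma:supCont}. This is resolved by the neighborhood restriction to $N$, which bounds all relevant lengths by $T_0 + 1$, together with the extension of $\bar p(\bar u_0, z_0)$ to a continuous function $\psi_0$ on all of $\R_{\ge 0}$ via the identity $\bar u_{<t} = \bar u$ for $t \ge |\bar u|$; this lets both trajectories be regarded as functions on a common fixed interval so that the supremum defining the sup-part of $\varrho_\XX$ fits the template of the lemma.
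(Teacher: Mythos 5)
Your proof takes essentially the same route as the paper: both reduce continuity of $\bar p$ at a fixed $(\bar u_0,z_0)$ to Lemma~\ref{lemma:supCont}, expressing $\varrho_\XX(\bar p(\bar u,z),\bar p(\bar u_0,z_0))$ as a supremum that the lemma controls. Your choice $h(\bar u,z)=\min(T_0,|\bar u|)$ is in fact slightly more careful than the paper's $h(\bar u,z)=|\bar u|$, which does not land in $[0,T]$ when $|\bar u|>T$; with that clipping in place, the restriction to $N$ and the extension to $\psi_0$ become harmless but unnecessary.
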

\begin{proof}
  To check that $\bar x=\bar p(\bar u,x)$ belongs to $\XX$,
  simply note that by the continuity of $p$ and of $\bar u\mapsto \bar u_{<t}$,
  $\bar x$ is also continuous.  
  For the continuity of $\bar p$, it is enough
  to show that for all $(\bar u_0,z_0)\in \UU\times Z$ and all $\e$,
  the inverse image of $B_\XX(\bar p(\bar u_0,z_0),\e)$ is open.
  Thus, fix $(\bar u_0,z_0)\in \UU\times Z$. Let $T=|u_0|$.
  Define $f\colon \UU\times Z\times[0,T]\to \R$
  by
  $$f(\bar u,z,t)=d_X(p(\bar u_{<t},z),p((\bar u_0)_{<t},z_0))+|T-|\bar u||.$$
  By the continuity of $(\bar u,t)\mapsto \bar u_{<t}$, of $\bar u\mapsto |\bar u|$
  (see Definition~\ref{def:TopoU}),
  of $p$, and of the metric $d_X$, $f$ is continuous. Let $h\colon\UU\times Z\to [0,T]$
  be defined by $h(\bar u,z)=|\bar u|$ which is again continuous. By Lemma~\ref{lemma:supCont},
  the function $g\colon \UU\times Z\to \R$ given by $g(\bar u,z)=\sup\{f(\bar u,z,t)\mid t\in [0,h(\bar u,z)]\}$
  is continuous. But
  $g(\bar u,z)=\varrho_\XX(\bar p(\bar u,z),\bar p(\bar u_0,z_0))$. Now consider the inverse image
  \begin{align*}
    \bar p^{-1}B_\XX(\bar p(\bar u_0,z_0),\e)&=\{(\bar u,z)\in\UU\times Z\mid g(\bar u,z)<\e\}\\
    &=g^{-1}(-\e,\e).
  \end{align*}
  which is open by the continuity of~$g$.
\end{proof}

\begin{Cor}\label{cor:ContinuousPathAction}
  Suppose $p\colon\UU\times X\to X$ is a path action.
  Let $\bar p$ be defined as in \eqref{eq:defofbarpgeneral}.
  Then, $\bar p\colon \UU\times X\to \XX$ is continuous.
\end{Cor}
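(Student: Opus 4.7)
The plan is to recognize that Corollary~\ref{cor:ContinuousPathAction} is the immediate specialization of Proposition~\ref{prop:ContinuousGen} to the case $Z = X$. Concretely, I would set $Z := X$ in the statement of Proposition~\ref{prop:ContinuousGen} and verify that its hypotheses are satisfied by the data of the corollary: $\UU$ is as in Definition~\ref{def:TopoU} by assumption, $X$ is a Polish metric space (this is the standing assumption on state spaces in Section~\ref{sec:Topological}, justifying that $\XX$ with the metric \eqref{eq:RhoXX} is the relevant target), and the path action $p\colon \UU\times X\to X$ is continuous by clause (PA1)--(PA2) of Definition~\ref{def:path-action}, which explicitly builds continuity into the notion. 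The map $\bar p$ of \eqref{eq:defofbarpgeneral} is then literally the map defined by formula \eqref{eq:defofbarpverygeneral} in Proposition~\ref{prop:ContinuousGen}, so its conclusion applies without modification.

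The second (and only) step is then to read off the conclusion of Proposition~\ref{prop:ContinuousGen}: the range of $\bar p$ lies in $\XX$, and $\bar p\colon \UU\times X\to \XX$ is continuous, which is exactly the content of the corollary. There is essentially no obstacle here, since all of the analytic work, reducing continuity of $\bar p$ to continuity of the auxiliary function $(\bar u, z, t)\mapsto d_X(p(\bar u_{<t},z),p((\bar u_0)_{<t},z_0)) + |T - |\bar u||$ and then invoking Lemma~\ref{lemma:supCont} to handle the supremum over $[0,|\bar u|]$, has already been carried out in the proof of Proposition~\ref{prop:ContinuousGen}. The corollary's role is simply to record the instance needed when the parameter space $Z$ and the state space $X$ coincide, which is the case of interest for the sensorimotor framework developed in the subsequent sections.
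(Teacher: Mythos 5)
Your proposal is correct and follows exactly the paper's own proof: specialize Proposition~\ref{prop:ContinuousGen} to $Z = X$, noting that a path action is continuous by definition, so all hypotheses are met. The paper's proof consists of the single line ``Choose $Z=X$ in Proposition~\ref{prop:ContinuousGen},'' and your elaboration merely spells out the hypothesis-checking implicit in that step.
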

\begin{proof}
    Choose $Z=X$ in Proposition~\ref{prop:ContinuousGen}.
\end{proof}

For our purposes a full-fledged action as described in
Definition~\ref{def:path-action} is often not necessary. In our setup
(Definition~\ref{def:Environment}) the environment will always have a
unique initial state $x_0\in X$ for the robot; thus, all trajectories
will start from that point. It is only a technicality, as switching
the initial state can be formalized as switching the environment from
$(X,x_0)$ to $(X,x_1)$. However, it will make mathematics easier for our
considerations of covering spaces. Because of this, it will often
be enough to consider the restriction $p\rest (\UU\times \{x_0\})$.
Thus, we define:

\begin{Def}\label{def:initialized_path-action}
  Let $(X,x_0)$ be a pointed space. An \emph{initialized path action}
  is a continuous $p\colon \UU\to X$ such that $p(\es)=x_0$. We also
  denote $p\colon\UU\to (X,x_0)$, or even
  $p\colon(\UU,\es)\to (X,x_0)$, to emphasize that it is a map between
  pointed spaces. As for path actions \eqref{eq:defofbarpgeneral}, given an initialized
  path action $p$, define 
  $\bar p\colon \UU\to \XX$ by
  \begin{equation}
    \label{eq:defofbarpgeneral_init}
    \bar p(\bar u)(t)=p(\bar u_{<t}), \quad t\le |\bar u|.
  \end{equation}
\end{Def}

\begin{Remark}\label{rem:FromPathTOInitPath}
  Given a path action $p\colon\UU\times X\to X$, the function
  $p_{x_0}\colon\UU\to X$ defined by $p_{x_0}(\bar u)=p(\bar u,x_0)$
  is an initialized path action.
\end{Remark}

\begin{Prop}\label{prop:ContinuousInitPA}
  Let $p\colon \UU\to (X,x_0)$ be an initialized path action.
  Then, $\bar p$ is continuous.
\end{Prop}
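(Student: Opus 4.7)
The plan is to reduce this immediately to Proposition~\ref{prop:ContinuousGen}, which is already the general continuity statement we need, just stated with an auxiliary parameter space $Z$. The idea is to treat the initialized action as the specialization of a parametrized action where the parameter space is a single point.

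More concretely, I would take $Z = \{*\}$ to be the one-point topological space and define $q\colon \UU\times Z\to X$ by $q(\bar u,*) = p(\bar u)$. Since $p$ is continuous by hypothesis and the projection $\UU\times \{*\}\to \UU$ is a homeomorphism, $q$ is continuous. Proposition~\ref{prop:ContinuousGen} then yields that the induced map $\bar q\colon \UU\times Z\to \XX$ defined by
\[
\bar q(\bar u,*)(t) = q(\bar u_{<t},*) = p(\bar u_{<t}), \qquad 0\le t\le |\bar u|,
\]
lands in $\XX$ and is continuous. But this is exactly $\bar p$ composed with the homeomorphism $\bar u\mapsto (\bar u,*)$, so $\bar p$ is continuous.

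There is essentially no obstacle here: all the real work (using Lemma~\ref{lemma:supCont}, continuity of the truncation and length maps from Definition~\ref{def:TopoU}, and the sup-metric on $\XX$) was already absorbed into the proof of Proposition~\ref{prop:ContinuousGen}. The only thing worth double-checking is that the reduction is legitimate, i.e.\ that the image of $\bar p$ really does lie in $\XX$ rather than merely in $X^*$. This is immediate: for fixed $\bar u$, the path $t\mapsto p(\bar u_{<t})$ is continuous on $[0,|\bar u|]$ because $\tau_2\colon (\bar u,t)\mapsto \bar u_{<t}$ is continuous (Definition~\ref{def:TopoU}) and $p$ is continuous, so the composition $t\mapsto p(\tau_2(\bar u,t))$ is continuous. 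Hence $\bar p(\bar u)\in C([0,|\bar u|],X)\subset \XX$, and the proof is complete.
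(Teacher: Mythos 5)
Your reduction is exactly the paper's proof: it also specializes Proposition~\ref{prop:ContinuousGen} to a one-point parameter space (the paper writes $Z=\{x_0\}$ rather than $Z=\{*\}$) and identifies $\UU\times Z$ with $\UU$. The additional verification you give that the image lands in $\XX$ is already contained in Proposition~\ref{prop:ContinuousGen}, so nothing further is needed.
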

\begin{proof}
  Choose $Z=\{x_0\}$ in Proposition~\ref{prop:ContinuousGen}, and identify
  $\UU\times \{x_0\}$ with~$\UU$.
\end{proof}

\section{History Information Spaces}
\label{sec:HISS}

In this section we first define environments and how
trajectories in $X$ become also trajectories in the sensory
space (Section~\ref{ssec:Life-paths}). Then we generalize
history information spaces in Section~\ref{ssec:ContHIS}.

\subsection{Trajectories in environments}
\label{ssec:Life-paths}

We want to now make precise the idea that two environments are indistinguishable from the point of view of
some mobile robot. We introduce some more definitions.  Fix a
Polish space of control signals $\UU$ as in
Definition~\ref{def:TopoU}.  Let the {\em observation space} $Y$ be any 
topological space, corresponding to the set of all outputs of a given 
sensor connected to the robot. When comparing environments, they all should
share $\UU$ and $Y$ because these are the ``interfaces'' between the robot
and its environment. If they are different, it means that the robot
has different actuators or different sensors, and comparing such
situations is beyond our present analysis. Thus, we consider that $\UU$
and $Y$ are fixed for the rest of the paper.

The algebra of Borel sets in a Polish space is the smallest
$\sigma$-algebra containing the basic open sets. A function from a
Polish space to another is \emph{Borel} if the inverse image of every
open set is Borel. Equivalently, the inverse image of every Borel set
is Borel. 

\begin{Def}\label{def:Environment}
  An \emph{environment} is a tuple $E=(X,x_0,h,p)$ where $X$ is a
  Polish space, $x_0\in X$ is the \emph{initial position},
  $h\colon X\to Y$ a Borel \emph{sensor mapping} \cite{LaValle2006}, and $p$ is either
  an initialized path action $p\colon \UU\to X$ or a path action
  $p\colon\UU\times X\to X$ (Definitions~\ref{def:path-action}
  and~\ref{def:initialized_path-action}).  The pair $(h,p)$ is called
  a \emph{sensorimotor structure} on $(X,x_0)$. We will assume that $p$
  is an initialized path action unless mentioned otherwise.
\end{Def}

We require $h$ to be Borel because continuity is too strong of a
requirement in general, but to have no requirements at all would make
working with $h$ difficult. The class of Borel functions is loose
enough to include all functions that are generally interesting in this
context, such as piecewise continuous functions. Unlike measurable
sets, Borel sets are topologically invariant (preserved by
homeomorphisms). If we were to define a Radon measure on $X$,
then all Borel functions would automatically be measurable. Moreover,
all Borel functions are continuous on a co-meager set. (Co-meager sets
are sets containing an intersection of countably many dense open sets.) Yet another
benefit for us is that the composition of Borel functions 
is a Borel function and hence measurable. 

Let $\YY$ be the space of measurable functions
$\bar y\colon [0,T)\to Y$, just as $\UU$ is the set of measurable
functions into~$U$ (Definition~\ref{def:PathSpaceBasics}).
As defined in \eqref{eq:defofbarpgeneral_init}, each path
$\bar u\colon [0,T)\to U$ in $\UU$ generates a path
$\bar p(\bar u)=\bar x_{\bar u}\colon [0,T]\to X$ defined by
\begin{equation}
  \label{eq:defofx_baru}
  \bar x_{\bar u}(t)=p(\bar u_{<t})=\bar p(\bar u)(t)
\end{equation}
for all $0\le t<T$.  This
$\bar x_{\bar u}$ is the trajectory that the robot will traverse in
the configuration space $X$ starting from its initial position $x_0$
and applying the control given by~$\bar u$. This trajectory generates a unique path
$\bar y_{\bar u}\colon \R_{\ge 0}\to Y$ in the robot's observation space
defined by
\begin{equation}
  \label{eq:DefOfysubu}
  \bar y_{\bar u}(t)=h(\bar x_{\bar u}(t))=h(\bar p(\bar u)(t))=h(p(\bar u_{<t})).
\end{equation}
Since $\bar x_{\bar u}$ is continuous and $h$ is Borel,
$\bar y_{\bar u}$ is measurable.

Define the induced sensory trajectory map
$\bar h\colon\XX\to \YY$ by
\begin{equation}
  \label{eq:defofbarh}
  \bar h(\bar x)(t)=h(\bar x(t)).
\end{equation}
Using this notation, we have
\begin{equation}
  \label{eq:ybar_in_terms_of_barhbarp}
  y_{\bar u}=\bar h(\bar p(\bar u)).
\end{equation}

\subsection{Continuous-time history information states}
\label{ssec:ContHIS}

The following definition can be thought of as a continuous-time version of
the history information space $\II$ \cite[Ch.~11]{LaValle2006},
and~\cite{Sakcak2023,Tovar2005,WeiSakLav22}.

\begin{Def}[History information space]
  A pair $\eta=(\bar u,\bar y)\in \UU\times \YY$ is a \emph{history
    information state}.  The set of all information states
  $\UU\times \YY$ is denoted by~$\II$.
\end{Def}

Now, each environment $E=(X,x_0,h,p)$ carves out a subspace of $\II$
which contains only those histories that are possible in~$E$.

\begin{Def}\label{def:IIsupE}
  Given an environment $E$, let $\II^{E}$ be the set of all pairs
  $(\bar u,\bar y)$ where $\bar y=\bar y_{\bar u}$.  To
  specify in which environment $\bar y$ was obtained from $\bar u$,
  we denote $\bar y^E_{\bar u}=\bar y_{\bar u}$.
\end{Def}

\section{Indistinguishability of Environments}
\label{sec:Indistinguishability}

In this section we deal with the equivalence relations of
indistinguishability of environments. We will later introduce the
non-symmetric relation of {\em strong} indistinguishability in
Section~\ref{ssec:Strongly}.

\subsection{Main construction}

We will now present the main formal ingredients of the theory of
indistinguishability.

\begin{Def}\label{def:I-equiv}
  Two environments $E=(X,x_0,h,p)$ and $E'=(X',x'_0,h',p')$ are
  \emph{$\II$-equivalent}, if for all $\bar u\in \UU$,
  $\bar y^{E}_{\bar u}=\bar y^{E'}_{\bar u}$.  Denote the
  $\II$-equivalence relation by~$\equiv^{\II}$.
\end{Def}

We have the following:
\begin{Lemma}\label{lemma:TFAE}
  The following are equivalent:
  \begin{enumerate}
  \item $E\equiv^{\II} E'$,
  \item $\II^{E}=\II^{E'}$,
  \item $\bar h\circ \bar p=\bar h'\circ \bar p'$,
  \item $h\circ p=h'\circ p'$.
  \end{enumerate}
\end{Lemma}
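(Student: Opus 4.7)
The plan is to prove the equivalences by unwinding the definitions; no deep machinery is required, but one must be careful to use closure of $\UU$ under segmentation at the right moment. I would show the cycle (1)$\Leftrightarrow$(3), (3)$\Leftrightarrow$(4), and (1)$\Leftrightarrow$(2), in that order.

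First I would establish (1)$\Leftrightarrow$(3). By equation~\eqref{eq:ybar_in_terms_of_barhbarp}, for every $\bar u\in\UU$ we have $\bar y^{E}_{\bar u}=\bar h(\bar p(\bar u))$ and likewise $\bar y^{E'}_{\bar u}=\bar h'(\bar p'(\bar u))$. Hence Definition~\ref{def:I-equiv} asserting $\bar y^{E}_{\bar u}=\bar y^{E'}_{\bar u}$ for all $\bar u$ is literally the pointwise statement of the functional equality $\bar h\circ\bar p=\bar h'\circ\bar p'$.

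Next I would show (3)$\Leftrightarrow$(4). For the direction (3)$\Rightarrow$(4), fix $\bar u\in\UU$ and evaluate both sides of (3) at $t=|\bar u|$. Using \eqref{eq:defofbarpgeneral_init} together with the trivial identity $\bar u_{<|\bar u|}=\bar u$, one obtains $\bar h(\bar p(\bar u))(|\bar u|)=h(p(\bar u))$, and analogously on the primed side, which gives (4). For (4)$\Rightarrow$(3), fix $\bar u$ and $t\in[0,|\bar u|]$, and apply (4) to the restricted control signal $\bar u_{<t}$; this is where closure of $\UU$ under segmentation (Definition~\ref{def:TopoU}) is needed so that $\bar u_{<t}\in\UU$ and thus (4) is applicable. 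The computation $\bar h(\bar p(\bar u))(t)=h(p(\bar u_{<t}))=h'(p'(\bar u_{<t}))=\bar h'(\bar p'(\bar u))(t)$ then yields (3).

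Finally, (1)$\Leftrightarrow$(2) is a direct consequence of Definition~\ref{def:IIsupE}: the set $\II^{E}$ is the graph of the map $\bar u\mapsto\bar y^{E}_{\bar u}$, and two functions with the same domain agree iff their graphs coincide, so $\II^{E}=\II^{E'}$ is equivalent to the pointwise condition (1).

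The only substantive obstacle is the reduction (4)$\Rightarrow$(3), since (4) only records information at endpoints of trajectories, whereas (3) is a statement about the whole continuous sensor trajectory. That gap is bridged exactly by segmentation: every intermediate value $\bar p(\bar u)(t)$ is itself the endpoint of the shorter trajectory $\bar p(\bar u_{<t})$, so the endpoint condition (4) suffices to recover the full trajectory-level agreement.
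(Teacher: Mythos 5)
Your proposal is correct and follows essentially the same approach as the paper: both rely on unfolding the definitions of $\bar y^E_{\bar u}$, $\II^E$, $\bar h$, and $\bar p$, and both use closure of $\UU$ under segmentation to pass from the endpoint condition~(4) back to the trajectory-level condition. The only cosmetic difference is organizational—you prove three biconditionals $(1)\Leftrightarrow(3)$, $(3)\Leftrightarrow(4)$, $(1)\Leftrightarrow(2)$, while the paper traverses the cycle $1\Rightarrow2\Rightarrow3\Rightarrow4\Rightarrow1$—but the underlying computations are the same.
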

\begin{proof}
  \begin{description}
  \item[1.$\Rightarrow$2.] follows from the definitions of $\equiv^{\II}$
    and $\II^{E}$, $\II^{E'}$.
  \item[2.$\Rightarrow$3.] Suppose $\bar u\in \UU$.  Then, there is only
    one $\bar y\in \YY$ such that the pair $(\bar u,\bar y)$ is in
    $\II^{E}$ and it is $\bar y=\bar y^{E}_{\bar u}$ which by
    \eqref{eq:ybar_in_terms_of_barhbarp} equals
    \begin{equation}
      \label{eq:yEuhpu}
      \bar y^{E}_{\bar u}=\bar h(\bar p(\bar u)).
    \end{equation}
    However, since
    $\II^{E}=\II^{E'}$, also
    $(\bar u,\bar y^{E}_{\bar u})\in \II^{E'}$. By the fact that
    in $\II^{E'}$ there is only one pair with the first coordinate
    equal to $\bar u$, we have $\bar y^{E}_{\bar u}=\bar y^{E'}_{\bar u}$
    and using \eqref{eq:ybar_in_terms_of_barhbarp} again we have
    $\bar y^{E'}_{\bar u}=\bar h'(\bar p'(\bar u))$
    and using \eqref{eq:yEuhpu} we have
    $\bar h(\bar p(\bar u))=\bar h'(\bar p'(\bar u))$.
    By the arbitrary choice of $\bar u$, we conclude~3.
  \item[3.$\Rightarrow$4.] Let $\bar u\in \UU$.  
    By~\eqref{eq:defofbarpgeneral_init}
    we can write
    \begin{equation}
      \label{eq:baruupu}
      \bar p(\bar u')(|\bar u|)=p(\bar u)\quad\text{ and }\quad\bar p'(\bar u')(|\bar u|)=p'(\bar u).
    \end{equation}
    Using \eqref{eq:defofbarh}, we now obtain:
    \begin{align*}
      h(p(\bar u))&\stackrel{\eqref{eq:baruupu}}{=}h(\bar p(\bar u')(|\bar u|)) 
                    \stackrel{\eqref{eq:defofbarh}}{=}\bar h(\bar p(\bar u'))(|\bar u|)
                    =(\bar h\circ\bar p)(\bar u')(|\bar u|)\\
                  &\stackrel{3.}{=}(\bar h'\circ\bar p')(\bar u')(|\bar u|) 
                  =\bar h'(\bar p'(\bar u'))(|\bar u|) 
                  \stackrel{\eqref{eq:defofbarh}}{=}h'(\bar p'(\bar u')(|\bar u|)) \\
                  &\stackrel{\eqref{eq:baruupu}}{=}h'(p'(\bar u)).
    \end{align*}
  \item[4.$\Rightarrow$1.] Let $\bar u\in \UU$. We need to show that
    $\bar y_{\bar u}^{E}=\bar y^{E'}_{\bar u}$. By~\eqref{eq:DefOfysubu},
    for all $t<|\bar y_{\bar u}|$ we have
    $$\bar y^E_{\bar u}=h(p(\bar u_{<t}))\stackrel{4.}{=}h(p(\bar u_{<t}))=\bar y^{E'}_{\bar u},$$
    which completes the proof.
  \end{description}
\end{proof}

\begin{Cor}\label{cor:ER}
  $\equiv^{\II}$ is an equivalence relation.
\end{Cor}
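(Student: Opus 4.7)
The plan is to reduce this immediately to Lemma~\ref{lemma:TFAE}. That lemma established that $E \equiv^{\II} E'$ is equivalent to the equality of functions $h \circ p = h' \circ p'$ (clause (4)), both viewed as maps $\UU \to Y$. Since plain equality of functions is itself an equivalence relation, all three defining properties transfer to $\equiv^{\II}$ with no further work.

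Concretely, I would proceed in three short steps. For reflexivity, note that $h \circ p = h \circ p$ holds trivially, so $E \equiv^{\II} E$ by the (4)$\Rightarrow$(1) direction of Lemma~\ref{lemma:TFAE}. For symmetry, if $E \equiv^{\II} E'$, then by (1)$\Rightarrow$(4) we get $h \circ p = h' \circ p'$, which is the same assertion as $h' \circ p' = h \circ p$, hence $E' \equiv^{\II} E$ by (4)$\Rightarrow$(1) again. For transitivity, if $E \equiv^{\II} E'$ and $E' \equiv^{\II} E''$, then by (1)$\Rightarrow$(4) twice we have $h \circ p = h' \circ p'$ and $h' \circ p' = h'' \circ p''$; chaining the equalities gives $h \circ p = h'' \circ p''$, so $E \equiv^{\II} E''$ by (4)$\Rightarrow$(1).

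There is no genuine obstacle, since all the substance was already absorbed into Lemma~\ref{lemma:TFAE}. One could in fact bypass characterization (4) and argue directly from the definition: the condition $\bar y^{E}_{\bar u} = \bar y^{E'}_{\bar u}$ for all $\bar u \in \UU$ is pointwise equality of two assignments $\bar u \mapsto \bar y^{\cdot}_{\bar u}$, which is again an equivalence relation on the nose. Either route produces a proof of a single line or two; I would prefer the version that cites (4) of the lemma, as it makes clear that the equivalence relation is induced by a well-defined function $E \mapsto h \circ p$ on the class of environments.
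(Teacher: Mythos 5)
Your proof is correct and follows exactly the paper's route: the paper's one-line proof (``follows easily from any of the characterizations given by Lemma~\ref{lemma:TFAE}'') is precisely the observation you spell out, that characterization (4) reduces $\equiv^{\II}$ to equality of the maps $h\circ p$, which is manifestly an equivalence relation. Your more detailed write-up is a faithful unpacking of the same idea, not a different argument.
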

\begin{proof}
  Follows easily from any of the characterizations given by
  Lemma~\ref{lemma:TFAE}.
\end{proof}

Thus, $\II$-equivalence means that no matter what the robot \emph{does}, it cannot
receive different sensory readings in these two environments. We
return to the example of a circle and a line.

\begin{Example}\label{ex:CircleAdvanced}
  Fix $U=\{-1,1\}$ and some Polish observation space~$Y$. Suppose $X=S^1$ is given
  as $\{e^{i\theta}\mid \theta\in\R\}$ with the initial point
  $x_0=e^{0}$. Suppose $h\colon S^1\to Y$ is a continuous sensor
  mapping and given $\bar u\in\UU$, the robot's state is given by
  $p(\bar u)=e^{i\theta(\bar u)}$, where
  $$\theta(\bar u)=\int_{0}^{|\bar u|} \bar u(t)dt.$$
  This defines an environment $E=(X,x_0,h,p)$.

  Now let $X'=\R$, $x_0'=0$, and
  $p'(\bar u)=\int_{0}^{|\bar u|} \bar u(t)dt$.  To define
  $h'\colon\R\to Y$, let $f\colon\R\to S^1$ be the covering map
  $t\mapsto e^{it}$, and let $h'=h\circ f$. Now $p'$ is a lifting
  of $p$, $f\circ p'=p$, and the following diagram commutes:
  \begin{equation}
    \label{eq:Commutative1}
    \begin{tikzcd}
      & (X',x'_0) \arrow[d,"f"]  \arrow[rd,"h'"]& \\
      (\UU,\es)  \arrow[r, "p"]\arrow[ru, "p'"]& (X,x_0) \arrow[r,"h"]  & Y.
    \end{tikzcd}
  \end{equation}
  This diagram shows \emph{pointed spaces} which are pairs
  $(Z,z)$ with $Z$ a space and $z\in Z$ a point. The arrows
  correspond to maps which take the selected point to the selected
  point. The selected point in $\es \in \UU$ is the empty control signal
  corresponding to~$T=0$. We have not shown the selected point in $Y$
  because our maps do not have a requirement to map $x_0$ or $x'_0$ to
  any particular point, although due to commutation, we know that
  $h'(x_0')=h(x_0)$. From the above we have
  $$h\circ p=h\circ (f\circ p')=(h\circ f)\circ p'=h'\circ p'.$$
  Thus, by Lemma~\ref{lemma:TFAE}($1.\Leftrightarrow 4$),
  we have $E'\equiv^{\II}E$, and so the environments are indistinguishable.

  Note that we \emph{defined} $h'$ from $h$ using $f$. In
  Section~\ref{sec:Covering} we will also see that such $p'$ as above
  can always be obtained from any initialized path action~$p$.  This
  means that \emph{no matter which} sensor mapping there is on the
  circle, the possibility that it is actually the line can never be
  ruled out.
  We will see in Section~\ref{sec:Covering} (especially
  Theorem~\ref{thm:Lift}) that the notion of a covering space plays a
  key role here.
\end{Example}

We can also formulate an equivalence in terms of ``eternal'' or unbounded
trajectories. This will have the advantage of increased generality. We define:

\begin{Def}\label{def:Branches}
  A \emph{branch} through $\UU$ is defined to be a function
  ${\bf u}\colon \R_{\ge 0}\to U$ such that for all $t\in\R_{\ge 0}$,
  we have that ${\bf u}\rest [0,t)\in \UU$. Let $B\UU$ be the set of
  branches through~$\UU$. Similarly, denote by ${\bf x}$ a branch
  through $\XX$, i.e., a function ${\bf x}\colon \R_{\ge 0}\to X$ such
  that ${\bf x}\rest [0,T]\in \XX$ for all $T\in \R_{\ge 0}$. Denote
  the set of branches through $\XX$ by $B\XX$. Similarly let $B\YY$ be
  the set of branches through $\YY$ defined analogously. 
  \end{Def}

  Denote by ${\bf p}$ and ${\bf h}$ the natural extensions of $\bar p$ and
  $\bar h$ to the sets of branches:
  \begin{equation}
    \label{eq:Defofbfp}
    {\bf p}\colon B\UU\to B\XX,\qquad {\bf p}({\bf u})(t)=p({\bf u}_{<t}),
  \end{equation}
  \begin{equation}
  \label{eq:Defofbfh}
  {\bf h}\colon B\XX\to B\YY,\qquad {\bf h}({\bf x})(t)=h({\bf x}(t)),
  \end{equation}
  in which ${\bf u}_{<t}={\bf u}\rest [0,t)$ is defined just like for
  elements of~$\UU$.

For the bounded-domain paths, given a path action
$p\colon \UU\times X\to X$, each branch ${\bf u}$ through $\UU$
generates a unique trajectory
${\bf x}_{{\bf u}}={\bf p}({\bf u})\colon\R_{\ge 0}\to X$ through the
state space given by
\begin{equation}
  \label{eq:def_bfx}
  {\bf x}_{{\bf u}}(t)=p({\bf u}_{<t}) .
\end{equation}
This trajectory generates a unique path
${\bf y}_{\bf u}={\bf h}({\bf x})\colon \R_{\ge 0}\to Y$ in the observation space defined by
${\bf h}({\bf x})(t)=h(p({\bf u}\rest t))$.  We write
${\bf y}^{E}_{{\bf u}}$ to specify the environment $E$ in which it was
computed.

\begin{Def}\label{def:fulllife}
  The branches ${\bf x}\in B\XX$ are called \emph{full trajectories} and
  ${\bf y}\in B\YY$ are called the \emph{full sensory histories}. The
  set of pairs $({\bf u},{\bf y})\in B\UU\times B\YY$ is the set of
  \emph{full information histories} and we denote it by
  $\mathbfcal{I}$. Analogously to Definition~\ref{def:IIsupE}
  we also define $\mathbfcal{I}^E\subset \mathbfcal{I}$ to
  be the set of the pairs of the form~$({\bf u},{\bf y}^E_{\bf u})$
  for a given environment~$E$.
\end{Def}

We can now reformulate our definition of equivalence in terms of full
information histories. This definition is more general than the
definition of $\II$-equivalence
(Definition~\ref{def:I-equiv}) in the sense that it includes it
as a special case (see Section~\ref{ssec:EE}), but also gives the
possibility of defining a whole class of filter-based equivalence relations. A \emph{filter} is an equivalence relation $F$
either on $Y$, on $\YY$, or on $B\YY$ usually so that an equivalence on
$Y$ induces equivalences on the function spaces through pointwise
application. Examples of filters are the gap-sensor
(Example~\ref{ex:GapNavTrees}), the beam-sensor
(Example~\ref{ex:Beams2}) or considering histories up to homeomorphisms
(see Section~\ref{ssec:ExamplesToppo}).  

\begin{Def}\label{def:LifePathEq}
  Given an equivalence relation $F$ on the set of full information histories
  $B\UU\times B\YY$, let $\equiv_{F}$ be an equivalence relation on the set of
  all environments $E$, $E'$ such that
  $$E \equiv_F E'$$
  if and only if for all ${\bf u} \in B\UU$,
  $(({\bf u},{\bf y}^{E}_{\bf u}),({\bf u},{\bf y}^{E'}_{\bf u}))\in F$.
  We call this \emph{filter based full historical equivalence}
  or just \emph{filter based equivalence}.
  If $F$ is the identity relation, we drop it from the notation,
  so $\equiv$ is the same as $\equiv_{\id}$. We call this
  \emph{full historical equivalence} or just \emph{historical equivalence}.
\end{Def}

Similarly as for $\equiv^{\II}$, we have:
\begin{Lemma}\label{lemma:TFAE2}
  The following are equivalent:
  \begin{enumerate}
  \item $E\equiv E'$,
  \item $\mathbfcal{I}^{E}=\mathbfcal{I}^{E'}$,
  \item ${\bf h}\circ {\bf p}={\bf h}'\circ {\bf p}'$,
  \item $\bar h\circ \bar p=\bar h'\circ \bar p'$,
  \item $h\circ p=h'\circ p'$,
  \end{enumerate}
\end{Lemma}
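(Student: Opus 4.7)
The plan is to follow the structure of Lemma~\ref{lemma:TFAE} closely, treating the branch-valued list as its natural extension; in fact (4) and (5) here are identical to (3) and (4) of Lemma~\ref{lemma:TFAE}, so the equivalence $(4)\Leftrightarrow(5)$ can be quoted from that proof. For $(1)\Leftrightarrow(2)$, observe that $\mathbfcal{I}^E$ is just the graph of the map ${\bf u}\mapsto {\bf y}^E_{\bf u}$, so equality of these sets is the same as pointwise equality of the branch-level sensory histories, which is precisely the definition of $\equiv$. For $(2)\Leftrightarrow(3)$, rewrite ${\bf y}^E_{\bf u}={\bf h}({\bf p}({\bf u}))$ using \eqref{eq:Defofbfp}--\eqref{eq:Defofbfh} and repeat the reasoning from $(2)\Leftrightarrow(3)$ of Lemma~\ref{lemma:TFAE} verbatim, with branches in place of finite control signals.

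The substantive new step is the bridge $(3)\Leftrightarrow(5)$, which converts a branch-level identity into a pointwise identity over $\UU$. One direction is immediate: if (5) holds, then for any ${\bf u}\in B\UU$ and any $t\ge 0$ the restriction ${\bf u}_{<t}={\bf u}\rest[0,t)$ lies in $\UU$, so
$$
{\bf h}({\bf p}({\bf u}))(t)=h(p({\bf u}_{<t}))=h'(p'({\bf u}_{<t}))={\bf h}'({\bf p}'({\bf u}))(t),
$$
and (3) follows. For the converse I need to produce, for each $\bar u\in\UU$, a branch extending it. The idea is to iterate the extensiveness clause of Definition~\ref{def:TopoU}: starting with $\bar u_0=\bar u$, extensiveness yields $\bar u_n\lhd\bar u_{n+1}$ in $\UU$ with $|\bar u_{n+1}|\ge|\bar u_n|+1$, and the pointwise union ${\bf u}(t):=\bar u_n(t)$ for any $n$ with $|\bar u_n|>t$ is well defined thanks to $\lhd$-compatibility. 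Closure under segmentation ensures ${\bf u}_{<t}=(\bar u_n)_{<t}\in\UU$ for every $t$, so ${\bf u}\in B\UU$, and by construction ${\bf u}_{<|\bar u|}=\bar u$. Then
$$
h(p(\bar u))={\bf h}({\bf p}({\bf u}))(|\bar u|)={\bf h}'({\bf p}'({\bf u}))(|\bar u|)=h'(p'(\bar u))
$$
by (3), establishing (5).

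The only technical point is the diagonal-union construction of this extending branch, and even that is a short argument enabled purely by the two axioms on $\UU$ (segmentation and extensiveness) without any appeal to metric or measure-theoretic structure; everything else is bookkeeping parallel to Lemma~\ref{lemma:TFAE}. A compact way to write up the proof is to cite the chain $(1)\Leftrightarrow(2)\Leftrightarrow(3)\Leftrightarrow(5)\Leftrightarrow(4)$, where the first two equivalences are branch analogues of Lemma~\ref{lemma:TFAE}, the middle one is the bridge described above, and the last is a direct quote of Lemma~\ref{lemma:TFAE}.
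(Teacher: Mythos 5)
Your proof is correct and follows essentially the same route as the paper: the paper cycles through $1\Rightarrow 2\Rightarrow 3\Rightarrow 4\Rightarrow 5\Rightarrow 1$, observes that $4\Rightarrow 5\Rightarrow 1$ are verbatim restatements of items in Lemma~\ref{lemma:TFAE}, and isolates $3\Rightarrow 4$ as the only new step, handled by producing (via extensiveness) a branch ${\bf u}$ with ${\bf u}_{<|\bar u|}=\bar u$ and evaluating pointwise. You route $3\Leftrightarrow 5$ instead of $3\Rightarrow 4$, which lets you evaluate only at the single endpoint $t=|\bar u|$ rather than for all $t\le|\bar u|$, a minor simplification; the key ingredient—the iterated-extensiveness construction of the extending branch, made well-defined by $\lhd$-compatibility and lying in $B\UU$ by closure under segmentation—is identical to, and in fact spelled out more explicitly than, what the paper leaves implicit.
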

\begin{proof}
  The implications $1.\Rightarrow2.\Rightarrow3.$ are proved
  similarly as the same ones in Lemma~\ref{lemma:TFAE},
  and the implications $4.\Rightarrow5.\Rightarrow1.$
  similarly as $3.\Rightarrow4.\Rightarrow1.$ in 
  Lemma~\ref{lemma:TFAE}. The only one requiring a new argument
  is $3.\Rightarrow4.$ which we do now.
  \begin{description}
  \item[3.$\Rightarrow$4.] Suppose $\bar u\in \UU$. Then, by
  the property of being extensive (Definition~\ref{def:TopoU}),
  we can find ${\bf u}\in B\UU$ with ${\bf u}_{<|\bar u|=\bar u}$.
  Let $t\le |\bar u|$. Then,
  \begin{align*}
  \bar h(\bar p(\bar u))(t)&\stackrel{\eqref{eq:defofbarh}}{=} h(\bar p(\bar u)(t))\stackrel{\eqref{eq:defofbarpgeneral_init}}{=} h(p(\bar u_{<t}))
  =h(p(({\bf u}_{<|\bar u|})_{<t}))\\
  &\stackrel{t\le |\bar u|}{=} h(p({\bf u}_{<t}))\stackrel{\eqref{eq:Defofbfp}}{=} h(p({\bf u})(t))\stackrel{\eqref{eq:Defofbfh}}{=} {\bf h}\circ {\bf p}(t)\\
  &\stackrel{3.}{=} {\bf h'}\circ{\bf p'}(t)\stackrel{\eqref{eq:Defofbfh}}{=} h'(p'({\bf u})(t)) \stackrel{\eqref{eq:Defofbfp}}{=}  h'(p'({\bf u}_{<t}))\\
  &= h'(p'(({\bf u}_{<|\bar u|})_{<t}))=  h'(p'(\bar u_{<t})) \stackrel{\eqref{eq:defofbarh}}{=}  \bar h'(\bar p'(\bar u))(t).\qedhere
  \end{align*}  
  \end{description}
\end{proof}

\subsection{Examples of filter-based equivalence relations}
\label{ssec:ExamplesToppo}

\begin{Example}\label{ex:Toppo}
  Define
  $({\bf u},{\bf y}^{E}_{\bf u})\homeo ({\bf u},{\bf y}^{E'}_{\bf
    u})$, if and only if there is a homeomorphism
  $f\colon \R_{\ge 0}\to\R_{\ge 0}$ such that
  ${\bf y}\circ f={\bf y'}$ and ${\bf u}\circ f={\bf u'}$.  Then
  consider $\equiv_{\homeo}$.  Clearly, $\equiv$-equivalence implies
  $\equiv_{\homeo}$-equivalence by choosing the homeomorphism to be
  the identity.  The relation $\equiv_{\homeo}$ pertains to robots
  whose time perception is only relational: there is memory and
  knowledge of which chronological order the events occurred in, but
  not of how much time passed between them. This is because the set of
  homeomorphisms $\R_{\ge 0}\to\R_{\ge 0}$ is exactly the set of
  strictly order preserving bijections.
\end{Example}

\begin{Example}\label{ex:InducedF}
  Suppose $F$ is an equivalence relation on $Y$ making some
  observations indistinguishable. It induces an equivalence relation
  $\bar F$ on $\YY\cup B\YY$ through pointwise application:
  $(\bar y_0,\bar y_1)\in \bar F$ if $\dom (\bar y_0)=\dom(\bar y_1)$
  and for all $t\in\dom(\bar y_0)$ we have
  $(\bar y_0(t),\bar y_1(t))\in F$. Then, $\equiv_{\bar F}$ is an
  equivalence relation which equates those environments which cannot
  be distinguished by any robot that is equipped with this
  filter~$F$. We will abuse the notation and denote $\equiv_F$ in this
  case instead of $\equiv_{\bar F}$ and call it 
  \emph{filter based equivalence induced by~$F$}.
\end{Example}

\begin{Example}\label{ex:Sensorimotor}
  Suppose the robot has a high-level sensor which reports the result
  of a low-level sensorimotor interaction. For example repeatedly
  pushing against a wall can provide the data of how hard the wall is
  and moving whiskers can generate data about the presence of
  obstacles or texture of surfaces~\cite{Diamond2008}.  This can be
  expressed by defining a labeling on the set of information histories
  of fixed length $g\colon \{(\bar u,\bar y)\in\II:|\bar y|=1\}\to L$,
  where $L$ is some set of labels. This induces an equivalence
  relation on $B\YY$ as follows:
  $$({\bf u},{\bf y})\in F_g\iff \forall n\in\N \Big(g({\bf u}\rest [n,n+1))=g({\bf y}\rest [n,n+1))\Big),$$
  or as a moving window as follows:
  $$({\bf u},{\bf y})\in F'_g\iff \forall t\in\R_{\ge 0}\Big( g({\bf u}\rest [t,t+1))=g({\bf y}\rest [t,t+1))\Big).$$  
  Then, $\equiv_{F_g}$ or $\equiv_{F'_g}$ capture the equivalence with
  respect to this filtering. 
\end{Example}

\begin{Open}\label{Q:Ispaces}
    How to generalize the notion of \emph{derived information spaces}
    \cite{LaValle2006,Tovar2005} and of
    \emph{sufficient equivalence relations}~\cite{WeiSakLav22}
    to the continuous framework?
\end{Open}

\subsection{Equivalence of equivalences}
\label{ssec:EE}

Both equivalence relations $\equiv^{\II}$ and $\equiv$ are defined in
a natural way of what would it mean for two environments to be
indistinguishable. In fact, they turn out to be
the same:

\begin{Thm}\label{thm:EquivEquiv}
  For all environments $E$ and $E'$ we have $E\equiv^{\II}E'$ iff
  $E\equiv E'$.
\end{Thm}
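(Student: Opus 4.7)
The plan is to observe that Theorem~\ref{thm:EquivEquiv} is essentially a corollary of Lemmas~\ref{lemma:TFAE} and~\ref{lemma:TFAE2}, since both characterize their respective equivalence relations by the same algebraic condition on the sensorimotor data. Specifically, by Lemma~\ref{lemma:TFAE}(1.$\Leftrightarrow$4.), $E\equiv^{\II}E'$ holds if and only if $h\circ p=h'\circ p'$, and by Lemma~\ref{lemma:TFAE2}(1.$\Leftrightarrow$5.), $E\equiv E'$ also holds if and only if $h\circ p=h'\circ p'$. Chaining the two biconditionals yields the theorem.

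Concretely, I would write the proof as: assume $E\equiv^{\II}E'$; apply Lemma~\ref{lemma:TFAE}($1.\Rightarrow 4.$) to obtain $h\circ p=h'\circ p'$; then apply Lemma~\ref{lemma:TFAE2}($5.\Rightarrow 1.$) to conclude $E\equiv E'$. For the converse, the same argument runs in reverse through the $4.\Rightarrow 1.$ direction of Lemma~\ref{lemma:TFAE} and the $1.\Rightarrow 5.$ direction of Lemma~\ref{lemma:TFAE2}. No additional computation is needed, since the nontrivial work — in particular the extension from bounded control signals in $\UU$ to full branches in $B\UU$ and back — has already been handled inside the proof of Lemma~\ref{lemma:TFAE2} (this is exactly the step $3.\Rightarrow 4.$ there that uses the \emph{extensive} property of $\UU$ from Definition~\ref{def:TopoU} to lift each $\bar u\in\UU$ to a branch ${\bf u}\in B\UU$ with ${\bf u}_{<|\bar u|}=\bar u$).

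There is really no obstacle to this proof; the work has been front-loaded into the two preceding lemmas. The only thing worth flagging is conceptual rather than technical: the theorem asserts that adding the ability to observe eternal histories does not increase discriminating power beyond bounded-time histories. This is ultimately because both notions collapse to the pointwise condition $h\circ p=h'\circ p'$, which is time-independent in its formulation and already captures everything the agent can possibly detect through the interface $(\UU,Y)$.
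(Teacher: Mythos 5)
Your proposal is correct and follows exactly the paper's own argument, which also cites Lemma~\ref{lemma:TFAE}(4) and Lemma~\ref{lemma:TFAE2}(5) to identify both equivalences with the condition $h\circ p=h'\circ p'$. You have merely spelled out the same two-lemma chaining in more explicit detail.
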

\begin{proof}
  Apply Lemma~\ref{lemma:TFAE}(4) and Lemma~\ref{lemma:TFAE2}(5).
\end{proof}

In view of Theorem~\ref{thm:EquivEquiv} it is enough to only talk about
equivalence relations of the form $\equiv_F$ for some equivalence
relation $F$ either on $Y$, on $\YY$ or on $B\YY$ (see
Examples~\ref{ex:Sensorimotor} and~\ref{ex:InducedF}).
This opens an intriguing avenue for future research, but in this paper
we only focus on~$\equiv$ with $F$ being the identity.

We have now defined a class of equivalence relations on tuples
$(X,x_0,h,p)$ which are, for all intents and purposes, continuous
time dynamical systems with a robotic twist (the sensor mapping $h$
and the nature of the action~$p$). The equivalence relations defined
are motivated by robotics because they are based on comparing orbits
(robot's trajectories) in terms of the sensor mapping~$h$. We now state
another open problem:

\begin{Open}\label{Q:Complexity}
  What is the complexity of various $\equiv_F$'s in terms of
  descriptive complexity and Borel reducibility~\cite{Gao2008}?  How
  do they compare to other known equivalence relations on dynamical
  systems? 
\end{Open}

\section{Homomorphisms and Covering Spaces}
\label{sec:HomoCover}

Upon seeing an equivalence relation, it is natural to ask 
\emph{which maps witness it}?
For example, a homeomorphism witnesses homotopy equivalence
but does not witness isometric equivalence.

\subsection{Homomorphisms of environments}

\begin{Def}\label{def:Homomorphism}
  Given environments $E=(X,x_0,h,p)$ and $E'=(X',x_0',h',p')$, a map
  $f\colon X'\to X$ is a \emph{homomorphism} (also denoted as
  $f\colon E'\to E$), if the following conditions are satisfied:
  \begin{description}
  \item[(HOM1)] $f(x'_0)=x_0$,
  \item[(HOM2)] for all $\bar u\in \UU$,
    $f(p'(\bar u))=p(\bar u)$,
  \item[(HOM3)] for all $x\in X$, $h'(x)=h(f(x))$.
  \end{description}
  This is to say that $f$ is such that the following diagram, which
  is already familiar from
  Example~\ref{ex:CircleAdvanced}\eqref{eq:Commutative1}, commutes:
  \begin{center}
    \begin{tikzcd}
      & (X',x'_0) \arrow[d,"f", dashed]  \arrow[rd,"h'"]& \\
      (\UU,\es)  \arrow[r, "p"]\arrow[ru, "p'"]& (X,x_0) \arrow[r,"h"]  & Y.
    \end{tikzcd}
  \end{center}
  If $p$ is a initialized path action (see Definition~\ref{def:Environment}), 
  then we require
  a strengthening of (HOM2):
  \begin{description}
  \item[(HOM2)'] For all $\bar u\in \UU$ and all $x'\in X'$,
    $f(p'(\bar u,x'))=p(\bar u,f(x'))$.
  \end{description}
  Recall that we are using pointed spaces. The arrows correspond
  to maps which take the selected point to the selected point. The
  selected point in $\UU$ is the empty control signal corresponding
  to~$T=0$. We have not shown the selected point in $Y$ because the
  sensor mappings $h$ and $h'$ do not have a requirement to map
  $x_0$ or $x'_0$ to any particular point, although due to commutation
  of the diagram (if holds), we know that $h'(x_0')=h(x_0)$.
\end{Def}

It is straightforward to see that the existence of a homomorphism
is a sufficient condition for the equivalence to take place:

\begin{Thm}\label{thm:Homomorphism}
  If there is a homomorphism $f\colon E'\to E$, then
  $E\equiv E'$ and $E\equiv^{\II}E'$.
\end{Thm}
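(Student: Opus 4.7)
The plan is to reduce the theorem to the characterization in Lemma~\ref{lemma:TFAE}(4), namely that $E\equiv^{\II}E'$ iff $h\circ p = h'\circ p'$ (and similarly via Lemma~\ref{lemma:TFAE2}(5) for $\equiv$, although Theorem~\ref{thm:EquivEquiv} already guarantees that it suffices to prove one of the two equivalences). The whole argument is a direct diagram chase using the three defining conditions (HOM1)--(HOM3) of a homomorphism.

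Concretely, I would fix an arbitrary $\bar u\in\UU$ and compute
\[
  h'(p'(\bar u))\stackrel{\text{(HOM3)}}{=} h(f(p'(\bar u))) \stackrel{\text{(HOM2)}}{=} h(p(\bar u)).
\]
Since $\bar u$ is arbitrary, this yields the equality of functions $h\circ p = h'\circ p'$. Then Lemma~\ref{lemma:TFAE} gives $E\equiv^{\II}E'$ and Lemma~\ref{lemma:TFAE2} gives $E\equiv E'$ (or one can invoke Theorem~\ref{thm:EquivEquiv} to deduce one from the other). Note that (HOM1) is not needed for this particular computation; it only ensures the diagram commutes as a diagram of pointed spaces.

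For the parenthetical case in the statement where $p$ is a full path action (not merely an initialized one), I would instead use (HOM2)$'$: for any $\bar u\in\UU$,
\[
  h'(p'(\bar u,x'_0))\stackrel{\text{(HOM3)}}{=} h(f(p'(\bar u,x'_0))) \stackrel{\text{(HOM2)}'}{=} h(p(\bar u,f(x'_0))) \stackrel{\text{(HOM1)}}{=} h(p(\bar u,x_0)),
\]
which is the same statement once we identify the initialized path actions $p_{x_0}$ and $p'_{x'_0}$ as in Remark~\ref{rem:FromPathTOInitPath}. There is essentially no obstacle here; the entire content of the theorem is that Definition~\ref{def:Homomorphism} was set up precisely so that the commuting diagram \emph{is} the equation $h\circ p = h'\circ p'$ witnessed factor-by-factor through $f$, which is exactly the condition appearing in Lemma~\ref{lemma:TFAE}(4).
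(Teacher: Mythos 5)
Your proof is correct and takes essentially the same approach as the paper: a direct diagram chase from (HOM2) and (HOM3) to $h\circ p = h'\circ p'$, then an appeal to Lemma~\ref{lemma:TFAE} and Lemma~\ref{lemma:TFAE2}. The only cosmetic difference is that you write the computation pointwise while the paper writes it as a composition of functions, and you helpfully spell out the uninitialized path-action variant via (HOM2)$'$ and (HOM1), which the paper leaves implicit.
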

\begin{proof}
  Consider the commutative diagram of Definition~\ref{def:Homomorphism}.
  Since the triangle on the left commutes, we have
  $f\circ p'=p$, and from the triangle on the right, we have
  $h'=h\circ f$. So,
  $$h\circ p=h\circ (f\circ p')=(h\circ f)\circ p'=h'\circ p'.$$
  Applying Lemmas~\ref{lemma:TFAE} and~\ref{lemma:TFAE2} we have the
  result.
\end{proof}

\subsection{Covering maps of environments}
\label{sec:Covering}

Covering maps have the role of ``unravelling'' fundamental groups.
A closed loop is the image of a non-closed path in the covering space.
This is why they are natural to analyse loop closure.
The topologist will also readily recognize the idea of covering spaces from the
commutative diagrams in \eqref{eq:Commutative1} and Definition~\ref{def:Homomorphism}. Before we can exploit this idea, we prove the following:

\begin{Lemma}\label{lemma:StrongDeformationRetract}
  The space $\UU$ is contractible.
\end{Lemma}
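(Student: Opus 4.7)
The natural strategy is to contract $\UU$ to the basepoint $\es \in \UU$ (the empty control signal of length zero) by progressively shortening each signal to the empty signal. The plan is to define the homotopy $H \colon \UU \times [0,1] \to \UU$ by
\[
H(\bar u, s) = \bar u_{<(1-s)|\bar u|}.
\]
At $s=0$ we get $H(\bar u, 0) = \bar u_{<|\bar u|} = \bar u$, and at $s=1$ we get $H(\bar u, 1) = \bar u_{<0} = \es$, so this does interpolate from the identity to the constant map at~$\es$. Note also that $H(\es, s) = \es_{<0} = \es$ for all $s$, so in fact this exhibits $\{\es\}$ as a strong deformation retract of~$\UU$.

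The well-definedness of $H$ uses the closure of $\UU$ under segmentation, which is part of Definition~\ref{def:TopoU}. For continuity, I would write $H$ as the composition
\[
\UU \times [0,1] \;\xrightarrow{(\bar u,s)\mapsto (\bar u,(1-s)|\bar u|)}\; \UU \times \R_{\ge 0} \;\xrightarrow{\tau_2}\; \UU,
\]
where $\tau_2$ is the segmentation map. The first map is continuous because its first component is the identity and its second component $(\bar u, s) \mapsto (1-s)|\bar u|$ is continuous in $s$ and continuous in $\bar u$ by the continuity of $\tau_1 \colon \bar u \mapsto |\bar u|$ (product of two continuous real-valued functions). The second map is continuous by Definition~\ref{def:TopoU} (which bakes in the conclusion of Proposition~\ref{lemma:ProjectionsContinuous}). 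Therefore $H$ is continuous.

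There is essentially no obstacle here: the content of the lemma is entirely packaged in the axioms imposed in Definition~\ref{def:TopoU}, since the continuity of $\tau_1$ and $\tau_2$ is exactly what is needed to make the obvious truncation homotopy go through. The only thing to keep an eye on is the degenerate case $|\bar u|=0$, which is handled automatically because $\tau_2(\es, 0) = \es$. No assumption that $\UU$ be closed under $\oplus$ is needed, so the argument applies equally well to $\UU_M$ and to the more restrictive setups such as $\UU_C$ mentioned in Remark~\ref{remark:ClosedUnderOplus}.
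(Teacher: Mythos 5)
Your proof is correct and takes essentially the same approach as the paper: both exhibit $\{\es\}$ as a strong deformation retract of $\UU$ by continuously truncating signals via $\tau_2$, with continuity coming straight from the axioms of Definition~\ref{def:TopoU}. The only difference is the parametrization — you shrink each signal proportionally with $(1-s)|\bar u|$, whereas the paper truncates at a uniform threshold $\theta(t) = -\ln t$ and must invoke the convention $-\ln 0 = \infty$; your choice keeps the second argument of $\tau_2$ in $\R_{\ge 0}$ throughout, so it sidesteps the separate check of continuity at $t=0$ that the paper's choice implicitly requires.
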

\begin{proof}
  We will prove a slightly stronger property that $\{\es\}\subset \UU$
  is a strong deformation retract of~$\UU$. Define
  $F\colon \UU\times [0,1]\to \UU$ by
  $F(\bar u,t)=\bar u_{<\theta(t)}$ where $\theta(t)=-\ln(t)$ and by
  convention $-\ln(0)=\infty$ and $\bar u_{<\infty}=\bar u$. By
  Definition~\ref{def:TopoU}, $F$ is continuous because the map
  $(\bar u,t)\mapsto \bar u_{<t}$ is. Clearly, $F(\bar u,0)$ is the
  identity, $F(\es,t)=\es$ for all $t$, and
  $F(\bar u,1)=\bar u_{<0}=\es$ for all $\bar u$. Thus, $F$ is a
  strong deformation retraction to~$\{\es\}$.
\end{proof}

We say that $A\subset X$ is \emph{reachable}, if for all $a\in A$
there is $\bar u\in \UU$ with $p(\bar u)=a$. We say that the
environment is \emph{fully reachable}, if $X$ is reachable.  A
\emph{covering map} $f\colon (X',x'_0)\to (X,x_0)$ is an onto map
which is a local homeomorphism. A pointed space $(X',x')$ is a
\emph{covering space} of $(X,x)$, if there is a \emph{covering map} from
$(X',x')$ to $(X,x)$. 

\begin{Lemma}\label{lemma:Unique}
   Assume that $\UU$ is path-connected and locally path-connected.
  $E=(X,x_0,h,p)$ is a fully reachable environment, and
  $(X',x_0')$ is a covering space of $(X,x_0)$ witnessed by the
  covering map $f$. Then, there is unique initialized path action
  $p'\colon \UU\to X'$ such that $f\circ p'=p$ and a unique 
  sensing mapping
  $h'\colon X'\to Y$ such that $h'=h\circ f$.  
\end{Lemma}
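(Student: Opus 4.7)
The plan is to reduce the existence and uniqueness of $p'$ to the classical lifting criterion for covering spaces, and to handle $h'$ trivially by definition.

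By Lemma~\ref{lemma:StrongDeformationRetract} the space $\UU$ is contractible, so $\pi_1(\UU,\es)$ is trivial; combined with the assumptions that $\UU$ is path-connected and locally path-connected, this places the continuous map $p\colon(\UU,\es)\to(X,x_0)$ squarely within the hypotheses of the standard lifting criterion for covering maps. The algebraic obstruction $p_*(\pi_1(\UU,\es))\subseteq f_*(\pi_1(X',x_0'))$ holds trivially, since the left-hand side is $\{1\}$, so there exists a continuous lift $p'\colon(\UU,\es)\to(X',x_0')$ with $f\circ p'=p$. Being continuous and satisfying $p'(\es)=x_0'$, this $p'$ is an initialized path action in the sense of Definition~\ref{def:initialized_path-action}. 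For uniqueness, the standard covering-space argument applies: given two such lifts $p_1',p_2'$, the locus $\{\bar u\in\UU:p_1'(\bar u)=p_2'(\bar u)\}$ is open and closed because $f$ is a local homeomorphism, and it is nonempty since both lifts send $\es$ to $x_0'$; path-connectedness of $\UU$ then forces it to be all of $\UU$.

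For the sensor mapping, the equation $h'=h\circ f$ is simultaneously its own definition and its own uniqueness statement, so nothing is at stake beyond verifying that $h'$ qualifies as a sensor mapping per Definition~\ref{def:Environment}. This amounts to checking that $h\circ f$ is Borel, which is immediate: $f$ is continuous (hence Borel) as a local homeomorphism, and the composition of a continuous map with a Borel map is Borel. The only substantive ingredient in the argument is thus the lifting criterion, and the key step is noting that contractibility of $\UU$ (Lemma~\ref{lemma:StrongDeformationRetract}) precisely removes the potential $\pi_1$-obstruction; the full-reachability hypothesis on $E$ does not actually enter this argument and is presumably retained for compatibility with subsequent results.
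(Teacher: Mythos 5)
Your proof is correct and follows essentially the same route as the paper's: both invoke contractibility of $\UU$ (Lemma~\ref{lemma:StrongDeformationRetract}) to kill the $\pi_1$-obstruction in the standard lifting criterion and then cite uniqueness of lifts. You additionally spell out the open-and-closed argument for uniqueness, check that $h\circ f$ is Borel, and correctly note that the full-reachability hypothesis is not actually used here.
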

\begin{proof}
  Since $\UU$ is simply
  connected (by Lemma \ref{lemma:StrongDeformationRetract}), there is a
  unique lift of $p\colon(\UU,\es)\to (X,x_0)$ to
  $p'\colon (\UU,\es)\to (X',x'_0)$; see Propositions~1.33 and~1.34 in~\cite{AT}. This $p'$ is a continuous map
  such that $f\circ p'=p$ and $p'(\es)=x'_0$. Thus, it satisfies the conditions of Definition~\ref{def:initialized_path-action}
  and is an initialized path action. The uniqueness and existence of $h'$ follow from its definition.
\end{proof}

\begin{Thm}\label{thm:Lift}
  Assume that $\UU$ is path-connected and locally path-connected, 
  $E=(X,x_0,h,p)$ is a fully reachable environment, and
  $(X',x_0')$ is a covering space of $(X,x_0)$.
  Then, there is a sensorimotor structure $(h',p')$ on $(X',x_0')$ making it into
  environment $E'=(X',x'_0,h',p')$ such that $E\equiv E'$.
\end{Thm}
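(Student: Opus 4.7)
The plan is to reduce the theorem to a direct application of Lemma~\ref{lemma:Unique} together with Theorem~\ref{thm:Homomorphism}. Indeed, the lifting machinery of covering space theory, which is already packaged for us in Lemma~\ref{lemma:Unique}, will produce the ingredients $p'$ and $h'$; then verifying that the covering map $f$ satisfies (HOM1)--(HOM3) reduces $E\equiv E'$ to an invocation of Theorem~\ref{thm:Homomorphism}.

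First I would apply Lemma~\ref{lemma:Unique}. Its hypotheses are met: $\UU$ is path-connected and locally path-connected by assumption, and, crucially, $\UU$ is simply connected thanks to Lemma~\ref{lemma:StrongDeformationRetract}, which is what powers the lift of $p\colon(\UU,\es)\to(X,x_0)$ to a unique continuous $p'\colon(\UU,\es)\to(X',x'_0)$ with $f\circ p'=p$ and $p'(\es)=x'_0$. This $p'$ is an initialized path action in the sense of Definition~\ref{def:initialized_path-action}. Simultaneously, the lemma provides the canonical sensor mapping $h'=h\circ f$.

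Second, I would verify that the tuple $E'=(X',x'_0,h',p')$ is an environment in the sense of Definition~\ref{def:Environment}. Since $f$ is a local homeomorphism onto the Polish space $X$, the covering space $X'$ inherits a Polish topology (it is locally homeomorphic to a Polish space and, as a covering of a separable space, is itself separable; one can transfer a complete metric via local sections). The sensor mapping $h'=h\circ f$ is Borel because $f$ is continuous, hence Borel, and the composition of Borel maps is Borel. The path action $p'$ is continuous and basepoint-preserving by Lemma~\ref{lemma:Unique}.

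Finally, I would observe that, by construction, $f\colon X'\to X$ itself is a homomorphism $E'\to E$: (HOM1) is the basepoint condition $f(x'_0)=x_0$ built into the lift; (HOM2) is exactly the equation $f\circ p'=p$ delivered by Lemma~\ref{lemma:Unique}; and (HOM3) is the defining identity $h'=h\circ f$. Invoking Theorem~\ref{thm:Homomorphism} then yields $E\equiv E'$, which is what we want. The main subtlety, and essentially the only one demanding care, is the verification that $X'$ carries a Polish topology; everything else is formal from Lemma~\ref{lemma:Unique} and Theorem~\ref{thm:Homomorphism}. If the Polish hypothesis on $X'$ is delicate in full generality, it can be arranged under mild additional assumptions on the covering (e.g.\ countable fibers, which hold automatically when $X$ is connected and locally path-connected), or absorbed into the statement.
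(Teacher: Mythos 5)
Your proposal is correct and takes essentially the same route as the paper: apply Lemma~\ref{lemma:Unique} to lift $p$ to $p'$ and pull back $h$ to $h'=h\circ f$, observe that the covering map $f$ is then a homomorphism $E'\to E$, and conclude via Theorem~\ref{thm:Homomorphism}. The only material you add beyond the paper's argument is the (reasonable, but unaddressed in the paper) concern about whether $X'$ inherits a Polish topology; the paper tacitly assumes this, so your extra paragraph is a careful observation rather than a divergence in method.
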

\begin{proof}
  Let $f$ witness that $(X',x'_0)$ is a covering space of $(X,x_0)$
  and let $h'$ and $p'$ be the functions given by Lemma~\ref{lemma:Unique}.
  Then, they make $f$ into homomorphism $E'\to E$, and the result follows from
  Theorem~\ref{thm:Homomorphism}.
\end{proof}

We assumed above that $\UU$ is path-connected and locally
path-connected. We show that these assumptions are satisfied
for $\UU=\UU_M$ of Definition~\ref{def:PathSpaceBasics}:

\begin{Prop}\label{prop:UUMpclpc}
  Let $\UU_M$ be as in Definition~\ref{def:PathSpaceBasics}
  with metric from Definition~\ref{def:MetricOnU}. 
  Then, $\UU_M$ is path-connected
  and locally path-connected.
\end{Prop}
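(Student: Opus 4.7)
The plan has two parts corresponding to the two conclusions. For \emph{path-connectedness}, I would invoke Lemma~\ref{lemma:StrongDeformationRetract}: $\UU_M$ satisfies the axioms of Definition~\ref{def:TopoU} (closure under segmentation and extensiveness from Proposition~\ref{prop:UU_Mclosedunder}, continuity of $\tau_1$ and $\tau_2$ from Proposition~\ref{lemma:ProjectionsContinuous}), so $\UU_M$ is contractible and therefore path-connected. Concretely, any $\bar u, \bar v \in \UU_M$ are joined by concatenating the continuous paths $t \mapsto \bar u_{<(1-t)|\bar u|}$ and $t \mapsto \bar v_{<t|\bar v|}$ through the point $\es$.

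For \emph{local path-connectedness}, I would fix $\bar u$ with $|\bar u|=T$ and $\e > 0$ and show that every $\bar v$ with $\varrho_{\UU_M}(\bar u, \bar v) < \e$ can be joined to $\bar u$ by a path staying in $B(\bar u, \e)$; applying the same construction locally around any point reachable this way shows that the set of such $\bar v$ is open, yielding an open path-connected neighborhood of $\bar u$ inside $B(\bar u, \e)$. Assume without loss of generality $S := |\bar v| \le T$. Define $\gamma \colon [0,1] \to \UU_M$ in two phases. For $t \in [0, 1/2]$, set $\gamma(t) := \bar v_{<2tS} \oplus \bar u_{\ge 2tS}$: this is a function of length $T$ that substitutes $\bar v$ for $\bar u$ on the initial segment $[0, 2tS)$. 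For $t \in [1/2, 1]$, set $\gamma(t) := (\bar v \oplus \bar u_{\ge S})_{<\,T - (2t-1)(T-S)}$, which truncates length continuously from $T$ to $S$. The two phases agree at $t = 1/2$, $\gamma(0) = \bar u$, and $\gamma(1) = \bar v$. Continuity of Phase~2 is immediate from Proposition~\ref{lemma:ProjectionsContinuous}(2), and continuity of Phase~1 follows from a direct $L^1$-estimate using the boundedness of $d_U$: varying the split point $2tS$ by $|t-t'|\cdot 2S$ changes $\gamma(t)$ pointwise only on a set of that measure, with integrand bounded by $\operatorname{diam}(d_U)$.

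The distance estimate is the payoff. Along Phase~1, $|\gamma(t)| = T$ and $\gamma(t)$ differs from $\bar u$ only on $[0, 2tS) \subseteq [0, S)$, where it equals $\bar v$, so
\[
\varrho_{\UU_M}(\bar u, \gamma(t)) \le \int_0^S d_U(\bar u(s), \bar v(s))\, ds \le \varrho_{\UU_M}(\bar u, \bar v) < \e.
\]
Along Phase~2, the same pointwise discrepancy is paired with a length difference of $(2t-1)(T-S) \le T-S$, and these together still sum to at most $\int_0^S d_U(\bar u(s), \bar v(s))\, ds + (T-S) = \varrho_{\UU_M}(\bar u, \bar v) < \e$. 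Hence $\gamma \subset B(\bar u, \e)$ as required. The main technical hurdle is verifying continuity of Phase~1, i.e.\ of the joint operation $(t, \bar v, \bar u) \mapsto \bar v_{<2tS} \oplus \bar u_{\ge 2tS}$ when the splitting point varies; this is where the standing boundedness assumption on $d_U$ (inherited from Proposition~\ref{prop:UPolish}) is essential, since it converts the measure of the ``shifted'' interval into an $L^1$-bound for the concatenation.
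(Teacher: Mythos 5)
Your proof is correct, and the distance estimates check out: in Phase~1, $\gamma(t)$ has length $T$ and agrees with $\bar u$ outside $[0,2tS)\subseteq[0,S)$, giving $\varrho_{\UU_M}(\bar u,\gamma(t))\le\int_0^S d_U(\bar u,\bar v)\,ds\le\varrho_{\UU_M}(\bar u,\bar v)$; in Phase~2 the length difference adds $(2t-1)(T-S)\le T-S$ and the total is still $\le\varrho_{\UU_M}(\bar u,\bar v)$. The overall strategy (build an explicit path between any two signals and bound its distance from the center of a ball) is the same one the paper uses, but there are two genuine differences worth noting. First, the paper collapses both conclusions into a single one-parameter family: it constructs $\gamma(s)$ that simultaneously interpolates the length from $T_0$ to $T_1$ and substitutes the longer signal's values onto a growing initial segment, and then proves the sharp additive identity $\varrho(\gamma(s),\bar u_0)+\varrho(\gamma(s),\bar u_1)=\varrho(\bar u_0,\bar u_1)$, from which path-connectedness of every ball (hence both conclusions) falls out at once. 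Your two-phase version -- splice first, then truncate -- achieves the same effect with inequalities rather than an identity, and your separate appeal to Lemma~\ref{lemma:StrongDeformationRetract} for global path-connectedness is technically redundant since the local argument already covers it (apply the ball-connectivity with $\e>\varrho_{\UU_M}(\bar u,\bar v)$). Second, and to your credit, you explicitly verify continuity of the splice map $t\mapsto \bar v_{<2tS}\oplus\bar u_{\ge 2tS}$, a point the paper's proof leaves implicit. Your argument via boundedness of $d_U$ works, though it is slightly stronger than needed: since $\gamma(t)$ and $\gamma(t')$ coincide outside the interval between the two split points and both equal either $\bar u$ or $\bar v$ on it, $\varrho_{\UU_M}(\gamma(t),\gamma(t'))=\int_{2\min(t,t')S}^{2\max(t,t')S} d_U(\bar u(s),\bar v(s))\,ds\to 0$ by absolute continuity of the Lebesgue integral, without invoking a uniform bound on~$d_U$.
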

\begin{proof}
  Let $\bar u_0,\bar u_1\in \UU_M$.  We will construct a path
  $\gamma\colon [0,1]\to \UU_M$ such that $\gamma(k)=u_k$ for
  $k\in \{0,1\}$ and for all $s\in [0,1]$ we will have
  $\varrho_{\UU_M}(\gamma(s),u_0)+\varrho_{\UU_M}(\gamma(s),u_1)=d(u_0,u_1)$.  From
  this it follows that every ball $B_{\UU_M}(\bar u,r)$, $r\in\R_+$, in 
  $\UU_M$ is path connected which implies
  the statement to be proved.  Suppose w.l.o.g. $T_1=|\bar u_1|\ge |\bar u_0|=T_0$.
  For $s\in [0,1]$ let $\gamma(s)$ be a path with $|\gamma(s)|=T_0+(T_1-T_0)s$
  such that for all $t\in [0,T_0+(T_1-T_0)s)$ we have
  $$\gamma(s)(t)=
  \begin{cases}
    \bar u_1(t),\text{ if }t<T_0s\text{ or }t>T_0\\
    \bar u_0(t),\text{ otherwise}.
  \end{cases}
  $$
  Being piecewise measurable, $\gamma$ is measurable; thus, $\gamma\in\UU_M$.
  Using the fact that $0\le s\le 1$ one can verify that
  $|T_0+(T_1-T_0)s-T_0|+|T_1-(T_0+(T_1-T_0)s)|=|T_1-T_0|$. 
  Then
  \begin{align*}
    &\varrho_{\UU_M}(\gamma(s),u_0)+\varrho_{\UU_M}(\gamma(s),u_1) \\
    &=\int_0^{T_0}d_U(\gamma(s)(t),u_0(t))dt+\int_0^{T_0}d_U(\gamma(s)(t),u_1(t))dt + |T_1-T_0|\\
    &=\int_0^{sT_0}d_U(\gamma(s)(t),u_0(t))dt +\int_{sT_0}^{T_0}d_U(\gamma(s)(t),u_0(t))dt\\
    &\phantom{===}+\int_0^{sT_0}d_U(\gamma(s)(t),u_1(t))dt +\int_{sT_0}^{T_0}d_U(\gamma(s)(t),u_1(t))dt + |T_1-T_0|
  \end{align*}
  %splitting environments to allow for pagebreak
  \begin{align*}
    &=\int_0^{sT_0}d_U(u_1(t),u_0(t))dt +\int_{sT_0}^{T_0}d_U(u_0(t),u_0(t))dt \\
    &\phantom{===}+\int_0^{sT_0}d_U(u_1(t),u_1(t))dt +\int_{sT_0}^{T_0}d_U(u_0(t),u_1(t))dt + |T_1-T_0|\\
    &=\int_0^{sT_0}d_U(u_1(t),u_0(t))dt +\int_{sT_0}^{T_0}d_U(u_0(t),u_1(t))dt + |T_1-T_0|\\
    &=\int_0^{T_0}d_U(u_1(t),u_0(t))dt  + |T_1-T_0|\\
    &=\varrho_{\UU_M}(u_0,u_1),
  \end{align*}
  which was to be proven.  
\end{proof}

\begin{Cor}\label{cor:Lift}
  Assume that $\UU=\UU_M$, $E=(X,x_0,h,p)$ an environment, and $(X',x_0')$ a covering
  space of $(X,x_0)$. 
  Then, there is a sensorimotor structure $(h',p')$ on $(X',x_0')$ making it into
  environment $E'=(X',x'_0,h',p')$ such that $E\equiv E'$.
\end{Cor}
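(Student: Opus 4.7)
The plan is to read this as a direct specialization of Theorem \ref{thm:Lift} to the particular choice $\UU=\UU_M$. The theorem requires two topological hypotheses on $\UU$, namely path-connectedness and local path-connectedness, and since we have just verified both of these for $\UU_M$ in Proposition \ref{prop:UUMpclpc}, the obvious route is: invoke that proposition to check the hypotheses of Theorem \ref{thm:Lift}, then let the theorem do the work.

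Concretely, I will proceed as follows. Let $f\colon (X',x_0')\to (X,x_0)$ witness that $(X',x_0')$ is a covering space of $(X,x_0)$. By Proposition \ref{prop:UUMpclpc}, $\UU_M$ is path-connected and locally path-connected. Moreover, by Lemma \ref{lemma:StrongDeformationRetract}, $\UU_M$ is contractible, hence simply connected, so the standard covering-space lifting criterion applies to $p\colon (\UU_M,\es)\to (X,x_0)$ and produces a unique continuous lift $p'\colon (\UU_M,\es)\to (X',x_0')$ with $p'(\es)=x_0'$ and $f\circ p'=p$; this $p'$ satisfies Definition \ref{def:initialized_path-action} and is thus an initialized path action. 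Define $h'\colon X'\to Y$ by $h'=h\circ f$, which is Borel as the composition of the continuous map $f$ with the Borel map $h$. Then $(h',p')$ is a sensorimotor structure on $(X',x_0')$, and $f$ is a homomorphism $E'\to E$ in the sense of Definition \ref{def:Homomorphism}, so Theorem \ref{thm:Homomorphism} gives $E\equiv E'$.

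The one subtlety worth flagging, and the only place I anticipate any resistance, is that Theorem \ref{thm:Lift} is stated under the additional assumption that $E$ is \emph{fully reachable}, whereas the corollary does not assume this. The clean fix is to observe that the full-reachability hypothesis is not actually used in the argument of Theorem \ref{thm:Lift}: the lift $p'$ exists from simple connectedness of $\UU_M$ alone, and the definition $h'=h\circ f$ is pointwise and does not care which points of $X$ are visited by $p$. So the same proof goes through verbatim here; alternatively, one can phrase the corollary as an instance of Theorem \ref{thm:Lift} applied to the restricted environment on the path-component of $x_0$ in $X$, which is automatically reachable since $\UU_M$ is path-connected. Either way, no further work is required beyond quoting Proposition \ref{prop:UUMpclpc} and Theorem \ref{thm:Lift}.
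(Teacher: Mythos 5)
Your proposal is correct and matches the paper's approach exactly: the paper's proof of Corollary~\ref{cor:Lift} is the one-line observation that, by Proposition~\ref{prop:UUMpclpc}, $\UU_M$ satisfies the hypotheses of Theorem~\ref{thm:Lift}. Your flag about full reachability is a genuine and sharp observation --- the corollary drops that hypothesis while Theorem~\ref{thm:Lift} (and Lemma~\ref{lemma:Unique}) state it, and the paper's own proof does not address the discrepancy; you are right that the proofs of Lemma~\ref{lemma:Unique} and Theorem~\ref{thm:Lift} never actually invoke reachability, only simple connectedness of $\UU$ via Lemma~\ref{lemma:StrongDeformationRetract}, so the assumption is vacuous and your two proposed fixes both work. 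In short: same route as the paper, plus a small but real clarification the paper omits.
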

\begin{proof}
    By Lemma~\ref{prop:UUMpclpc}, $\UU_M$ satisfies the assumptions of Theorem~\ref{thm:Lift}.
\end{proof}

\begin{Open}\label{Q:Conditions}
  What are the minimal topological conditions for $\UU$ such that
  Theorem~\ref{thm:Lift} holds and when are they satisfied?
\end{Open}

\begin{Thm}\label{thm:CoveringEquiv}
  Suppose $E=(X,x_0,h,p)$ and $E'=(X',x_0',h',p')$ are environments
  and that one of the following holds:
  \begin{enumerate}
  \item[(A)] There exists a common covering space $(\tilde X,\tilde x)$ of
    both $(X,x_0)$ and $(X',x'_0)$ witnessed by covering maps $f$ and
    $f'$ respectively such that the lifts of $p$ and $p'$ are
    identical and such that $h\circ f=h'\circ f$.
  \item[(B)] There exists a space $(\hat X,\hat x_0)$ such that both
    $(X,x_0)$ and $(X',x'_0)$ are covering spaces of
    $(\hat X,\hat x_0)$ witnessed by covering maps $f$ and $f'$
    respectively such that $f\circ p=f'\circ p'$ and there is
    $\hat h\colon \hat X\to Y$ such that $h$ and $h'$ are lifts of
    $\hat h$ along the respective covering maps.
  \end{enumerate}
  Then, $E\equiv E'$.
\end{Thm}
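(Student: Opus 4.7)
The plan is to reduce both cases to the criterion $h\circ p = h'\circ p'$, which by Lemma~\ref{lemma:TFAE2}(5) is equivalent to $E\equiv E'$. So in each case I just need to chase the commutative diagrams furnished by the hypothesis.

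For case (A), let $\tilde p\colon\UU\to \tilde X$ denote the common lift of $p$ along $f$ and of $p'$ along $f'$, so that $f\circ\tilde p = p$ and $f'\circ\tilde p = p'$. Reading the compatibility condition on sensors as $h\circ f = h'\circ f'$ (the natural version of the hypothesis, making both $h$ and $h'$ descend from a single map on $\tilde X$ once we work through the triangles), I compute
\[
h\circ p \;=\; h\circ f\circ \tilde p \;=\; h'\circ f'\circ \tilde p \;=\; h'\circ p',
\]
and Lemma~\ref{lemma:TFAE2} finishes this case.

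For case (B), set $\hat p = f\circ p = f'\circ p'\colon \UU\to \hat X$; this common factorization is exactly the hypothesis. The condition that $h$ and $h'$ are lifts of $\hat h$ along $f$ and $f'$ means $h = \hat h\circ f$ and $h' = \hat h\circ f'$ (this is the only direction in which the composition makes sense, since $f\colon X\to \hat X$ and $\hat h\colon \hat X\to Y$). Then
\[
h\circ p \;=\; \hat h\circ f\circ p \;=\; \hat h\circ \hat p \;=\; \hat h\circ f'\circ p' \;=\; h'\circ p',
\]
and again Lemma~\ref{lemma:TFAE2} yields $E\equiv E'$.

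Both parts are essentially diagram chases, so there is no serious obstacle beyond unpacking the hypotheses correctly. The only subtle point is to confirm that the phrases ``the lifts of $p$ and $p'$ are identical'' in (A) and ``$h$, $h'$ are lifts of $\hat h$'' in (B) refer to well-defined unique maps; this is guaranteed by Lemma~\ref{lemma:Unique} (applied with $\UU$ path-connected and locally path-connected) in (A), and by direct composition in (B). No use of the topology of $\UU$, $X$, or $Y$ beyond the existence of these lifts is needed, because the conclusion is pointwise equality of two functions into $Y$.
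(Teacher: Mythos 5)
Your proof is correct and is essentially the same argument as the paper's, with a small stylistic difference in packaging. The paper explicitly constructs the auxiliary environment $\tilde E=(\tilde X,\tilde x_0,\tilde h,\tilde p)$ (resp.\ $\hat E$), observes that $f,f'$ are then homomorphisms into (resp.\ out of) $E$ and $E'$, and invokes Theorem~\ref{thm:Homomorphism} together with transitivity of $\equiv$. You instead unfold that machinery and verify the criterion $h\circ p=h'\circ p'$ from Lemma~\ref{lemma:TFAE2}(5) by a direct two-line diagram chase; the underlying computation is identical, since the proof of Theorem~\ref{thm:Homomorphism} is exactly the associativity manipulation you carry out. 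You also correctly read the hypothesis in (A) as $h\circ f=h'\circ f'$ (the version that typechecks), which matches what the paper's own proof actually uses. Your remark invoking Lemma~\ref{lemma:Unique} for uniqueness of the lift in (A) is harmless but not needed here: the theorem's hypothesis already \emph{asserts} that the lifts exist and coincide, so no path-connectivity assumption on $\UU$ need be discharged.
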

\begin{proof}
  In the first case, denote by $\tilde p$ the lift of $p$ (and $p'$) and by
  $\tilde h=h\circ f=h'\circ f$. Then, it is clear that $f$ and $f'$
  are homomorphisms from
  $\tilde E=(\tilde X,\tilde x_0,\tilde h,\tilde p)$ to $E$ and $E'$
  respectively. Thus, we have $\tilde E\equiv E$ and $\tilde E\equiv E'$.
  By transitivity  we have $E\equiv E'$. Note that the fact that
  $\equiv$ is an equivalence relation follows from
  Corollary~\ref{cor:ER} and Theorem~\ref{thm:EquivEquiv}. 

  For the second case, denote $\hat p=f\circ p=f\circ p'$ and we have
  that $f$ and $f'$ are homomorphisms from $E$ and $E'$ respectively
  to $\hat E=(\hat X,\hat x_0,\hat h,\hat p)$; then apply 
  transitivity again.
\end{proof}

\begin{Example}[An application to topology]\label{ex:ApplicationToTop}
  We can use Corollary~\ref{thm:CoveringEquiv} to prove that certain
  spaces \emph{do not} have a common covering space, if we can show
  that a robot can distinguish between them. For example consider a
  robot which can detect the local homeomorphism type of its
  environment (for more on this see Theorem~\ref{thm:IsometryInvariant}
  in Section~\ref{sec:Revisiting}). Then, this robot can easily detect a
  difference between the following two spaces on Figure~\ref{fig:AppToT}.
  In the space on the left it is possible to go forward along a
  1-dimensional path and repeatedly bump into a 4-crossing 
  (by circling around the left-most loop). In the space on the right,
  however, no matter how the robot traverses along the 1-dimensional edges,
  it will sooner or later
  bump into a 3-crossing. U-turns midway are not made. Thus, $E\not\equiv E'$ and therefore by
  Corollary~\ref{thm:CoveringEquiv} there is no common covering space
  of both of them, neither there is a space which both of them are a
  covering space of.
\end{Example}

\subsection{Strong indistinguishability}
\label{ssec:Strongly}

Motivated by Theorem~\ref{thm:Lift}, we can define:

\begin{Def}
  A pointed space $(X,x_0)$ is \emph{strongly indistinguishable} from
  the pointed space $(X',x'_0)$, if for all sensorimotor structures
  $(h,p)$ on $(X,x_0)$ there is a sensorimotor structure $(h',p')$ on
  $(X',x_0')$ such that $E\equiv E'$ where $E=(X,x_0,h,p)$ and
  $E'=(X',x'_0,h',p')$.
\end{Def}

The relation of strong indistinguishability is not symmetric, but it is transitive and reflexive; thus, it determines a quasiorder on environments. The higher an environment is in this ordering 
the more ambiguities it has. At the top are the universal covering spaces and at the bottom
are the quotients with respect the equivalence relation $h(x)=h(x')$ (see~\cite{WeiSakLav22} for related concepts).
See Figure~\ref{fig:Indistinguishable} and it's caption.

\section{Equivalence Characterization and Bisimulation}
\label{sec:Bisim}

Theorem~\ref{thm:CoveringEquiv} gives sufficient conditions to decide
when two spaces are $\equiv$-equivalent via covering spaces and covering maps.
It is also appealing from the point of robotics because a visual sensor
will typically report information about the geometric structure of the environment
and covering spaces have the property of preserving the local topological structure.
The covering maps, however, do note characterize
the equivalence. For example, it is not hard to come up with examples
where a projection map is in the role of a homomorphism, or where no homomorphism
exists.

\begin{Example}
  For example let $X=S^1$ and $X'=S^1\times [0,1]$.
  Let $h\colon X\to \R$ and $h'\colon X'\to R$ be defined so that
  for all $\theta\in [0,2\pi)$ and all $t\in [0,1]$,
  $$h(e^{i\theta})=h'(e^{i\theta},t)=\sin(\theta).$$
  Thus, $h'$ does not depend on $t$. Further, let the initialized
  path actions be some functions $p\colon\UU\to X$ and
  $p'\colon\UU\to X'$ such that $\pr_1\circ p'=p$ where $\pr_1$ is the
  projection to the first coordinate. Now the projection mapping $\pr_1$
  is, in fact, a witness of the equivalence between these two
  environments, but of course there are no common covering spaces because these must be local homeomorphisms.  
\end{Example}

It is even possible to have two spaces which are equivalent but
there is no map at all that witnesses that:

\begin{Example}
  Let $X=X'=\Delta\lor I$ where $\Delta$ is a 2-simplex and $I$ is a
  1-simplex, and $\lor$ means that they are glued at one point, for
  example at a $0$-face of both. Thus, it looks like a kite.  Suppose
  $h\colon X\to \{-1,0,1\}$ is such that $h(x)=-1$ for all $x$ in the
  2-simplex and $h(x)=1$ for $x$ in the 1-simplex, except $h(x)=0$ at
  the point where they are glued. For $h'$ the numbers are flipped so
  that $h'=-h$. Now if a continuous $f\colon X\to X'$ commutes with
  the sensor mappings, then it is not surjective. It is not hard to
  come up with initialized path actions, however, which make the
  environments both equivalent and fully reachable. See Example~\ref{ex:Beams2}
  for another pair of such environments.
\end{Example}

As a solution we will use a notion of bisimulation in the continuous
setting. There is a lot of literature on bisimulation in the
topological and continuous setting, especially in the context of
hybrid systems~\cite{Cuijpers2004, Davoren, ferns2011bisimulation,
  henzinger1995s}.

\begin{Def}
  Let $E=(X,x_0,h,p)$ and $E'=(X',x'_0,h',p')$ be environments
  where $p$ and $p'$ are path actions (not initialized ones, see
  Definition~\ref{def:Environment}).  A binary relation
  $R\subset X\times X'$ is a \emph{bisimulation} between $E$ and $E'$,
  if $(x_0,x'_0)\in R$ and for all $(x,x')\in X\times X'$ the
  following holds: If $(x,x')\in R$, then $h(x)=h(x')$ and for all
  $\bar u\in\UU$, also $(p(\bar u,x),p(\bar u,x'))\in R$.
\end{Def}

\begin{Thm}\label{thm:Bisimulation}
  Suppose $E=(X,x_0,h,p)$ and $E'=(X',x'_0,h',p')$ are environments
  where $p$ and $p'$ are path actions (not initialized ones, see
  Definition~\ref{def:Environment}) and assume that $\UU$ is closed
  under~$\oplus$. Then, $E\equiv E'$ if and only if there is a bisimulation
  $R\subset X\times X'$.  
\end{Thm}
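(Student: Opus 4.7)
The plan is to prove both directions directly, using Lemma~\ref{lemma:TFAE2}(5) applied to the induced initialized path actions $p_{x_0}(\bar u)=p(\bar u,x_0)$ and $p'_{x'_0}(\bar u)=p'(\bar u,x'_0)$ (cf.\ Remark~\ref{rem:FromPathTOInitPath}) to replace the statement $E\equiv E'$ by the single pointwise identity $h\circ p_{x_0}=h'\circ p'_{x'_0}$.

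For the ``if'' direction I will suppose $R$ is a bisimulation and fix an arbitrary $\bar u\in\UU$. Applying the closure condition of $R$ to the pair $(x_0,x'_0)\in R$ with control signal $\bar u$ yields $(p(\bar u,x_0),p'(\bar u,x'_0))\in R$, and the sensor-matching condition on $R$ then gives $h(p(\bar u,x_0))=h'(p'(\bar u,x'_0))$. Since $\bar u$ is arbitrary, this is exactly the identity cited above, so $E\equiv E'$ follows.

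For the converse I will take as candidate bisimulation the image of $\UU$ under the pair of initialized actions,
\[
R \;=\; \{(p(\bar u,x_0),\,p'(\bar u,x'_0)) : \bar u\in\UU\}.
\]
Membership of the base pair $(x_0,x'_0)$ is immediate by picking $\bar u=\es$ and invoking (PA1). The sensor condition on an arbitrary element $(p(\bar u_0,x_0),p'(\bar u_0,x'_0))\in R$ is immediate from the identity $h\circ p_{x_0}=h'\circ p'_{x'_0}$, supplied by Lemma~\ref{lemma:TFAE2}(5) under the assumption $E\equiv E'$.

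The only delicate step, and the main place where the hypothesis that $\UU$ is closed under $\oplus$ enters, is closure of $R$ under the action. Given $(x,x')=(p(\bar u_0,x_0),p'(\bar u_0,x'_0))\in R$ and any $\bar v\in\UU$, the goal is to show $(p(\bar v,x),p'(\bar v,x'))\in R$. Here I will invoke Remark~\ref{remark:ClosedUnderOplus}: closure of $\UU$ under $\oplus$ upgrades (PA2) to (PA2$'$), so $p(\bar v,x)=p(\bar v,p(\bar u_0,x_0))=p(\bar u_0\oplus\bar v,x_0)$, and symmetrically for $p'$. Since $\bar u_0\oplus\bar v\in\UU$, the resulting pair is witnessed by $\bar u_0\oplus\bar v$ and thus lies in $R$. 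Without the $\oplus$-closure hypothesis the natural candidate $R$ would generally fail to be stable under the action, which is precisely why the theorem is formulated in this setting.
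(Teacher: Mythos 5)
Your proof is correct and is essentially the same as the paper's: both directions reduce to the identity $h\circ p_{x_0}=h'\circ p'_{x'_0}$ via Lemma~\ref{lemma:TFAE}/\ref{lemma:TFAE2}, both take the same candidate relation $R=\{(p(\bar u,x_0),p'(\bar u,x'_0)):\bar u\in\UU\}$, and both establish closure of $R$ under the action by passing from $\bar u_0$ and $\bar v$ to the concatenation $\bar u_0\oplus\bar v$, which is exactly where the $\oplus$-closure hypothesis is used (the paper phrases this via (PA2) together with Remark~\ref{rem:Basics}(4), you via the equivalent (PA2$'$), but it is the same step).
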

\begin{proof}
  We work with $\equiv^{\II}$ instead of $\equiv$ as justified
  Suppose $E\equiv^{\II} E'$. Then, define
  $R=\{(p(\bar u,x_0),p'(\bar u,x'_0))\mid \bar u\in\UU\}$.
  By choosing $\bar u=\es$, we have $(x_0,x'_0)\in R$.
  Suppose $(x,x')\in R$. Let $\bar u\in\UU$ witness this.
  Then
  $$h(x)=h(p_{x_0}(\bar u))=h'(p'_{x_0}(\bar u))=h'(x').$$
  Here we used the notation from Remark~\ref{rem:FromPathTOInitPath}
  and Lemma~\ref{lemma:TFAE}(4.). Now let $\bar u_1$ be arbitrary.
  Then, by Definition~\ref{def:path-action}(PA2) and Remark~\ref{rem:Basics}(4)
  we have
  $$p(\bar u_1,x)=p(\bar u_1,p(\bar u,x_0))=p(\bar u\oplus \bar u_1,x_0)$$
  and
  $$p'(\bar u_1,x)=p(\bar u_1,p'(\bar u,x_0))=p'(\bar u\oplus \bar u_1,x_0).$$
  Thus, $\bar u\oplus\bar u_1$ witnesses that $(p(\bar u_1,x),p(\bar u_1,x'))\in R$.
  This completes the proof of the ``only if''-part.

  Suppose now that $R$ is a bisimulation relation on $X\times X'$.
  Then, by definition of bisimulation $h(x_0)=h(x'_0)$, and for all
  $\bar u$, also $h(p(\bar u,x_0))=h'(p'(\bar u,x_0'))$. By
  Lemma~\ref{lemma:TFAE}, we are done.
\end{proof}

\begin{Open}
 Bisimulation often arises in the context of modal
 logic and Kripke models~\cite{van2010modal} and has also
 been studied by the present authors
 in the context of robotics and minimal sufficient equivalence relations~\cite{WeiSakLav22}. Can a closer connection to these
 areas established by Theorem~\ref{thm:Bisimulation}
 be made?
\end{Open}

%\section{Circling Back}
\section{Bringing It All Together}
\label{sec:Revisiting}

%Metaphorically speaking, w
We now circle back to the main questions of interest, raised in Sections \ref{sec:intro} and~\ref{sec:Loops}. An important special case of a sensor mapping $h$ 
is one which reports invariant information about the topological or metric properties of
the local neighbourhood of the agent. Call such sensor mapping \emph{geometry based}.
This is a typical function of distance measurements
and visual sensors in general. Covering maps preserve local topological structure,
and if additionally required to be local isometries, also local metric structure. 
Therefore covering maps naturally preserve geometry based sensor mappings.
This enables applying our framework to a diverse number of cases in theoretical 
robotics and we synthesize it in Theorem~\ref{thm:IsometryInvariant} 
and Corollary~\ref{cor:IsometryInvariant} below. These results can be seen as a
culmination of this paper and the original motivation to explore this topic.

Recall the setup of Section~\ref{ssec:SpaceOfTrajectories}. Let $O(X)$ be the set of open
subsets of a Polish space $X$ and suppose that $\xi\colon X\to O(X)$ is some
\emph{neighborhood function}, meaning that for all $x\in X$, we have
$x\in \xi(x)$. We will consider systems $(X,x_0,h,p)$, where $h(x)$ is
either a metric or a topological invariant of $\xi(x)$, meaning
that $\xi(x)\sim \xi(x')\rightarrow h(x)=h(x')$, where $\sim$
is either isometry or homeomorphism.  The idea is
that $\xi(x)$ is a set visible from $x$, and $h$ is some sensor mapping
that loses unnecessary information. For example, we revisit the
gap-navigation trees of \cite{TovMurLav07} (recall
Section~\ref{ssec:MinimalFiltering}).

\begin{Example}\label{ex:GapNavTrees}
  In the gap-navigation trees setup, $X$ is a closed subset of $\R^2$,
  and
  $$\xi(x)=\{x'\in X\mid [x,x']\subset X\}$$
  is the set of all points reachable by a line from the robot's
  position as
  depicted on Figure~\ref{fig:StarConvexInvariant}(left). This set
  is open in $X$, but not open in~$\R^2$. In fact, we can see that
  $\xi(x)$ is homeomorphic to a set $A$ that has the property
  $$B^2(0,1)\subseteq A\subseteq \bar B^2(0,1),$$
  meaning that it is the closed 2-disk with some parts of the boundary
  missing. The missing parts of the boundary correspond precisely to
  the gaps in the visual field, or the discontinuities in the distance
  function. Thus, the sensor mapping $h(x)=g(\xi(x))$, where $g$ reports the circular
  order of these discontinuities is a topological invariant
  of~$\xi(x)$. The paper \cite{TovMurLav07} addresses environments that are
  not simply connected, 
  which the authors handle by
  having the robot place distinguishable pebbles in the environment at selected locations.
  Now we can go a step further and utilize
  Theorem~\ref{thm:CoveringEquiv} to construct indistinguishable
  but non-homeomorphic environments for such a sensor, one can start
  with a region which is not simply connected, take its covering space such that
  the covering map preserves the star-convex neighbourhoods up to
  homeomorphism.  Too much distortion and they can be distinguished.
  Consider Figure~\ref{fig:GNTCov}. We denote the spaces depicted in (a), (b),
  and (c) respectively by $A$, $B$ and~$C$.
  The environment $B$ is a covering space
  of $A$ with a covering map which preserves the homeomorphism type of the star convex
  neighborhoods, i.e., commutes with the $\xi$-mapping defined above. 
  $C$ is also a covering space of $A$, but the covering map does not
  commute with $h=g\circ \xi$ and so is distinguishable from~$A$,
  and therefore also from~$B$.
  This is witnessed by the star convex neighborhood of the point in the top corner in
  Figure~\ref{fig:GNTCov}(d). This neighborhood has four discontinuities and such is nowhere
  to be found in $A$, $B$ or $C$. According to Theorem~\ref{thm:Lift}
  there exists some other sensor mapping $h'$ on $C$ which makes it indistinguishable
  from $A$. %Thus, there exist ``VR-glasses'' which, if the robot wears them in $C$, will make it indistinguishable from~$A$.
\end{Example}

\begin{figure}
    \centering
    \includegraphics{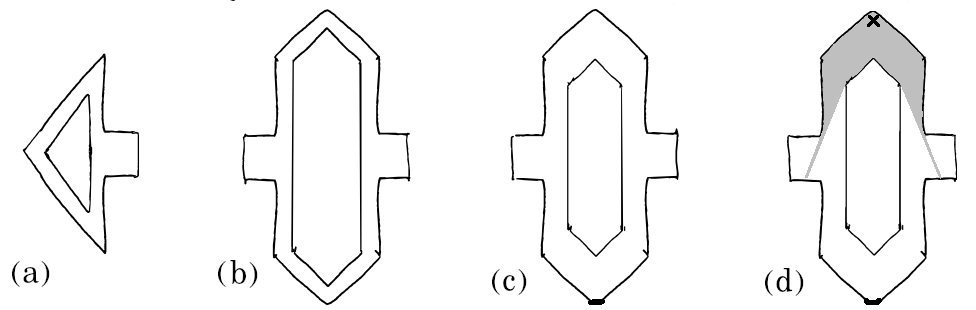}
    \caption{Environments (a) and (b) are indistinguishable by the gap-navigation sensor,
    but environments (a) and (c) as well as (b) and (c) are distinguishable. This is witnessed by the
    star convex neighborhood depicted in (d).}
    \label{fig:GNTCov}
\end{figure}

\begin{Example}\label{ex:WallFollowing}
  In the wall-following robot scenario of~\cite{Katsev2011}
  (Figure~\ref{fig:ConvexReflex}), the robot can sense only
  very locally. The polygonal environments in \cite{Katsev2011} are \emph{metrically locally uniform} in the following sense:
  \begin{description}
  \item[Local metric uniformity.] $X$ is equipped with a metric $d_X$ and for
    all $x\in X$ there is $\e_x>0$ such that for all $0<\delta<\e_x$ the
    subspaces $\bar B_X(x,\e_x)$ and $\bar B_X(x,\delta)$ are isometric. We call the
    isometry type of $\bar B_X(x,\e_x)$ the \emph{local isometry type at~$x$}.
  \end{description}
  Thus, let $\xi(x)=B_X(x,\e_x)$. Then, $\xi$ encodes local metric
  structure. The sensor of~\cite{Katsev2011} is now a metric 
  invariant of~$\xi$. Thus, in view of 
  Theorem~\ref{thm:CoveringEquiv}, if one
  wanted to construct environments indistinguishable 
  by such sensor, one could start with covering spaces whose 
  covering maps are local isometries.
\end{Example}

\begin{Example}\label{ex:GraphExplo}
  The problem of detecting graph isomorphism by exploring it
  \cite{Kuipers1988ARQ,Thrun1998} is a problem of reconstructing
  a global map from local information. Graphs, viewed as 1-complexes,
  are \emph{topologically locally uniform} in the following sense:
  \begin{description}
  \item[Local topological uniformity.] $X$ has a basis $B$ such that
    for all $x\in X$ there is $O_x\in B$ such that for all
    $O_1\subset O_x$ with $O_1\in B$, $O_1$ is homeomorphic to~$O_x$.
    We call the homeomorphism type of $O_x$ the \emph{local homeomorphism type of~$x$}.
  \end{description}
  Thus, now letting $\xi(x)=O_x$, it encodes the local homeomorphism type
  around the point. In graphs this will be either a straight line or a
  node of some degree $d\in\N$. Metric realizations of graphs can have
  edges of varying length which should be ignored. One option is to
  use the topological history information equivalence (Example~\ref{ex:Toppo}).
  The other option is to equip the complex first with a metric $d$ in
  which all the nodes of degree $d$ have isometric neighbourhoods and
  so that the distance between nodes equals one, and then redefine
  $d'(x,y)=\min\{d(x,y),\frac{1}{2}\}$ to lose all non-local
  information.

  By Theorem~\ref{thm:CoveringEquiv}, if the local homeomorphism type is all the robot can ever
  see, it cannot distinguish between 1-complexes which have the same
  universal covering. However, it \emph{can} distinguish between those that
  do not have. Most algorithms in this area will exploit other tools
  such as edge-labeling or pebble placing, for the robot to recognize
  which nodes have been visited already.
\end{Example}

\begin{Open}
    Develop algorithms for a robot to distinguish between
    non-$\equiv$-equivalent 1-complexes without edge labeling
    or pebble placing.
\end{Open}

  \begin{Open}\label{Q:Pebble}
    Can our present theory 
    elegantly accommodate the ``pebble placing''? It seems that even a
    single pebble can significantly narrow down the space of possible
    worlds in which the robot could find itself.
  \end{Open}

\begin{Example}\label{ex:Beams2}
  Consider the example of beam sensing~\cite{Tovar2009} of
  Figure~\ref{fig:Beams}. A robot moves in a multiply connected environment
  and whenever it crosses a beam, it senses the label of this beam. In 
  \cite{Tovar2009} the authors show that under certain assumptions this helps
  the robot to determine homotopy invariants of its own trajectory. One of the 
  assumptions is that each beam has a unique label. By dropping this assumption
  we can, using Corollary~\ref{cor:Lift}, design various environments which will be
  indistinguishable from the perspective of such a robot. We show some examples
  in Figure~\ref{fig:Beam2}. To see that the spaces (b) and (c) are covering spaces
  of (a), we refer to~\cite[p.~58]{AT}. Note that by Corollary~\ref{cor:Lift},
  \emph{any} arrangement of beams in (a) can be lifted to an arrangement of beams
  in (b) and (c) so that the environments become indistinguishable. Whereas
  (b) and (c) are also $\equiv$-equivalent (Theorem~\ref{thm:CoveringEquiv}(B)),
  neither one is a covering of the other one; thus, not all arrangements of beams
  on (b) can be lifted to (c), or vice versa. Environments (b) and (c) are examples
  of equivalent ones between which there is no map witnessing the equivalence.
  Only a many-to-many bisimulation witnesses the equivalence (Theorem~\ref{thm:Bisimulation}).
\end{Example}

\begin{figure}
  \centering
  \includegraphics[width=0.9\textwidth]{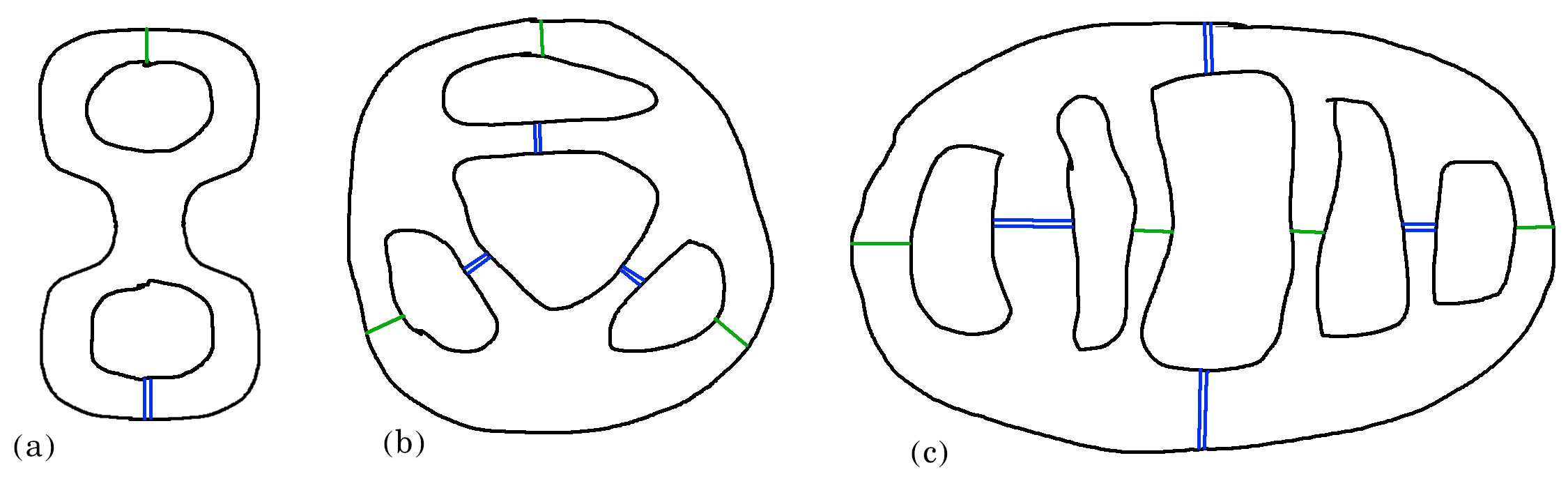}
  \caption{Indistinguishable environments for the beam-detecting robot. In each 
  there are two types of beams, a single (green) and a double (blue) beam.
  All three environments are $\equiv$-equivalent, and (b) and (c) are covering spaces
  of (a). Therefore, any arrangement of beams on (a) can be ``lifted'' to (b) and
  (c), making them indistinguishable.}
  \label{fig:Beam2}
\end{figure}

Motivated by the idea of local uniformity expressed in Examples 
\ref{ex:WallFollowing} and \ref{ex:GraphExplo}, we can formulate a general theorem. We will formulate
it for the metric uniformity, leaving topological uniformity for future work.
Let $P$ be the space of all Polish metric spaces (for example viewed as 
the Effros space of all closed subsets of the Urysohn space \cite{Gao2008,Kechris1995}).
For any 
Polish metric space $X\in P$ which is locally metrically uniform,
let $\xi^X\colon X\to P$ be a function such that 
$\xi^X(x)=\bar B_X(x,\e_x)$ where $\e_x$ is a witness for local metric
uniformity at $x$. Since the set of such $\e_x$ is a connected subset of $\R_+$
(since it is downward closed), it is $K_\sigma$ and so $\xi^X$ can always
be chosen to be Borel by the Arsenin-Kunugui uniformization 
theorem~\cite[18.18]{Kechris1995}. Note that since $\bar B_X(x,\e_x)$
is a closed subset of $X$, it is also a member of~$P$.
By a metric covering map $f$ we mean a covering
map which is a local isometry.
Fix $g\colon P\to Y$ to be any
isometry-invariant function which means that $g(M)\ne g(M')$ implies that $M$ and $M'$
are not isometric.  Recall that $\UU_M$ is the space of all measurable control signals
(Definition~\ref{def:PathSpaceBasics}). Below all path actions are assumed to have
the domain~$\UU_M$.

\begin{Thm}\label{thm:IsometryInvariant}
    Suppose $X$ and $X'$ are locally metrically uniform Polish spaces, $h=g\circ \xi^{X}$ and $h'=g\circ \xi^{X'}$,
    and that $p$ and $p'$ are initialized path actions
    on $X$ and $X'$ with $p(\es)=x_0$ and $p'(\es)=x_0'$. Suppose there is a metric covering map 
    $f\colon (X',x_0')\to (X,x_0)$ such that $f\circ p'=p$.
    Then, the environments $(X,x_0,h,p)$ and $(X',x_0',h',p')$ are $\equiv$-equivalent.
\end{Thm}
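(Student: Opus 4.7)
The plan is to reduce the claim, via Lemma~\ref{lemma:TFAE2}(5), to the pointwise identity $h \circ p = h' \circ p'$ on $\UU_M$. Fixing an arbitrary $\bar u \in \UU_M$, set $x' = p'(\bar u)$ and $x = p(\bar u)$. The hypothesis $f \circ p' = p$ gives $x = f(x')$, so what needs to be shown is $g(\xi^{X'}(x')) = g(\xi^{X}(x))$, and since $g$ is isometry-invariant, it is enough to produce an isometry between $\xi^{X'}(x')$ and $\xi^{X}(x)$.

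Next I would use the local isometry part of $f$ together with local metric uniformity to pick a single sufficiently small radius $\delta > 0$ that simultaneously (i) lies below $\e_{x'}$, (ii) lies below $\e_x = \e_{f(x')}$, and (iii) is small enough that $f$ restricts to an isometry $\bar B_{X'}(x',\delta) \to \bar B_X(x,\delta)$. Such a $\delta$ exists: the local-isometry condition at $x'$ provides a neighborhood on which $f$ is an isometry onto its image, while the two uniformity radii are positive, so the minimum of the three is admissible. Local metric uniformity at $x'$ then gives $\bar B_{X'}(x',\delta) \cong \bar B_{X'}(x',\e_{x'}) = \xi^{X'}(x')$, and at $x$ it gives $\bar B_X(x,\delta) \cong \bar B_X(x,\e_x) = \xi^{X}(x)$. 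Composing these two isometries with the isometry $f\rest \bar B_{X'}(x',\delta)$ yields an isometry $\xi^{X'}(x') \cong \xi^{X}(x)$, from which isometry-invariance of $g$ gives $h'(x') = g(\xi^{X'}(x')) = g(\xi^{X}(x)) = h(x) = h(p(\bar u))$, i.e.\ $h' \circ p' = h \circ p$, and Lemma~\ref{lemma:TFAE2}(5) concludes $E \equiv E'$.

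The one delicate step is part (iii)—the compatibility of the covering-map local isometry with the distance used to define the balls. A metric covering map is, locally, an isometry onto an open subset, but one needs to check that for small enough $\delta$ the image under $f$ of the \emph{intrinsic} $\delta$-ball at $x'$ in $X'$ actually coincides with the $\delta$-ball at $x$ in $X$, rather than just being a subset of it. This is standard for metric covering maps (shortest paths of length less than the local isometry radius lift uniquely), but care is required if $X$ or $X'$ is not a length space; if that is an issue, one should either add that hypothesis or read the local-isometry condition as the assertion that $f$ restricts to a genuine isometry of a metric ball. In any event, once this compatibility is in place, the three-way comparison via $\delta$ is immediate and the rest of the argument is formal.
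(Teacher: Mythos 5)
Your argument is essentially the same as the paper's: both reduce the claim to proving $h' = h\circ f$, choose a single radius $\delta$ that is simultaneously below both local-uniformity radii and small enough for $f$ to restrict to an isometry, and then chain isometries $\xi^{X'}(x') \cong \bar B_{X'}(x',\delta) \cong \bar B_X(f(x'),\delta) \cong \xi^X(f(x'))$ before applying isometry-invariance of $g$ and Lemma~\ref{lemma:TFAE2}(5). Your closing caveat—that one must know the local isometry carries a small $\delta$-ball \emph{onto} the $\delta$-ball downstairs, not merely into it—identifies a genuine subtlety that the paper's terse assertion that ``an isometry takes balls to balls of the same radius'' leaves implicit, and it is worth making that hypothesis (or a length-space assumption) explicit.
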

\begin{proof}
    We need to show that $h'=h\circ f$. To see this, let $x'\in X'$. Let $\e$ be small enough such that
    $f\rest B(x',\e)$ is an isometry, and for all $\delta<\e$, $B_X(x',\d)$ is isometric
    to $\xi^{X'}(x')$ and $B_{X'}(f(x),\delta)$ is isometric to $\xi^X(f(x'))$.
    However, an isometry takes balls to balls of the same radius; thus, $\xi^X(f(x'))$ is isometric
    to $\xi^{X'}(x')$ which implies $(g\circ \xi^{X'})(x')=(g\circ \xi^X)(f(x))$
    by the property that $g$ is invariant with respect to isometry.
    By the definition of $h$ and $h'$ this means $h'(x')=h(f(x))$.
\end{proof}

\begin{Cor}\label{cor:IsometryInvariant}
    Suppose $X$ and $X'$ are locally metrically uniform Polish spaces, $h=g\circ \xi^{X}$ and $h'=g\circ \xi^{X'}$,
    and that $p$ is an initialized path action
    on $X$ with $p(\es)=x_0$. Suppose there is a metric covering map 
    $f\colon (X',x_0')\to (X,x_0)$ for some $x_0'\in X'$. 
    Then there is a path action $p'$ on $X'$ such that 
    the environments $(X,x_0,h,p)$ and $(X',x_0',h',p')$ are $\equiv$-equivalent.
\end{Cor}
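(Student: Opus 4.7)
My plan is to reduce the statement to Theorem~\ref{thm:IsometryInvariant} by constructing the required initialized path action $p'$ on $X'$ as the lift of $p$ along the covering map $f$. The hypothesis of Theorem~\ref{thm:IsometryInvariant} that is missing from the corollary's statement is exactly $f \circ p' = p$; everything else (local metric uniformity of $X$ and $X'$, the form of $h$ and $h'$, the initialization $p'(\es)=x_0'$) is either given or will come for free from the lifting construction.

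For the construction itself, I would invoke the covering-space lifting step that underlies Lemma~\ref{lemma:Unique}. By Proposition~\ref{prop:UUMpclpc}, the space $\UU_M$ is path-connected and locally path-connected, and by Lemma~\ref{lemma:StrongDeformationRetract} it is contractible, hence simply connected. The standard lifting theorem for covering maps therefore produces a unique continuous map $p' : (\UU_M, \es) \to (X', x_0')$ with $f \circ p' = p$. Since $p'$ is continuous and $p'(\es) = x_0'$, it satisfies Definition~\ref{def:initialized_path-action} and is an initialized path action on $(X', x_0')$. Applying Theorem~\ref{thm:IsometryInvariant} to $E = (X,x_0,h,p)$ and $E' = (X',x_0',h',p')$ now yields $E \equiv E'$.

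The one mildly subtle point is that Lemma~\ref{lemma:Unique} is formulated with an extra hypothesis of full reachability of $E$, which is absent from the present corollary. This is not really an obstacle: the existence of the continuous lift $p'$ of the single map $p$ depends only on the topological properties of $\UU_M$ (path-connectedness, local path-connectedness, simple connectedness) and on $f$ being a covering map, not on surjectivity of $p$ onto $X$. One may therefore either appeal directly to the covering-space lifting theorem or simply note that this portion of the argument behind Lemma~\ref{lemma:Unique} goes through without the full reachability assumption. Once the lift is in hand, Theorem~\ref{thm:IsometryInvariant} closes the proof with no further work.
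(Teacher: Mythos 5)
Your proposal is correct and follows essentially the same route as the paper: the paper's proof is simply ``Using Corollary~\ref{cor:Lift}, let $p'$ be a lifting of $p$ and apply Theorem~\ref{thm:IsometryInvariant}.'' You have unpacked what ``let $p'$ be a lifting'' means by invoking the lifting criterion directly, using Proposition~\ref{prop:UUMpclpc} and Lemma~\ref{lemma:StrongDeformationRetract} to verify that $\UU_M$ is simply connected, path-connected, and locally path-connected, and then you close by the same application of Theorem~\ref{thm:IsometryInvariant}. Your remark about the full-reachability hypothesis is a genuine observation: the paper routes through Corollary~\ref{cor:Lift}, which in turn rests on Theorem~\ref{thm:Lift} and Lemma~\ref{lemma:Unique}, both stated under a fully-reachable assumption that is absent here; you are right that the existence (and uniqueness) of the lift $p'$ of a single continuous map from a simply connected, locally path-connected domain does not require $p$ to be surjective, so appealing directly to the covering-space lifting theorem is the cleaner move.
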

\begin{proof}
    Using Corollary~\ref{cor:Lift}, let $p'$ be a lifting of $p$ and apply
    Theorem~\ref{thm:IsometryInvariant}.
\end{proof}

\begin{Open}
   Prove results for topological uniformity that are analogous to Theorem~\ref{thm:IsometryInvariant}
   and Corollary~\ref{cor:IsometryInvariant}.
\end{Open}

\section{Conclusion}

This paper has formalized and unified previous notions pertaining to robots that explore unknown environments using limited sensors.
We started by motivating a mathematical analysis of the robotics loop closure problem. Perhaps our approach best describes the situation of
false positives in loop closure. Indeed, if $p(\bar u)\ne p'(\bar u)$,
but $h(p(\bar u))=h'(p'(\bar u))$, then we might be tempted to infer that
loop closure was detected, but it is a false alarm. This false
positive at the extreme is tantamount to the inability to distinguish
between the environments completely, that is when $h\circ p=h'\circ p'$.
Our intuition was that this is closely related to the idea of covering
spaces because covering maps literally \emph{close loops} by mapping,
at best, contractible spaces onto spaces with non-trivial fundamental groups.
With this motivation, we developed a general topological theory
that relates control signals, trajectories, and path actions in Section~\ref{sec:GeneralTheory}.
Building on the framework in \cite{Yershov2010} and the general theory of dynamical systems, we defined a continuous-time version of
history information
spaces. 
Then, we applied the resulting tools
to define various equivalence relations on environments, which are of independent topological and
set theoretic interest (recall
Open Problem~\ref{Q:Complexity}). We then moved on to the main motivation, covering spaces, and proved that if $X'$ is a covering space of $X$, then indeed
any sensorimotor structure can be lifted from $X$ to $X'$,
making it look exactly the same from the point of view
of a robot (Theorem~\ref{thm:Lift}). For this theorem
we assumed that $\UU$ is path-connected and locally path-connected.
We proved this for the space of measurable
controls $\UU_M$ (Proposition~\ref{prop:UUMpclpc}),  but we left 
open whether these
conditions can be weakened, and by how much (Open Problem~\ref{Q:Conditions}).
Covering spaces and
covering maps may be attractive from the point of view of
applications as they preserve local structure and are convenient
to work with, but they
do not give a complete mathematical characterization of the indistinguishability
relation. This is why in Section~\ref{sec:Bisim} we used the
notion of bisimulation to obtain a necessary and sufficient condition for the 
equivalence to take place. The final result was Theorem \ref{thm:IsometryInvariant}
along with Corollary~\ref{cor:IsometryInvariant}, which unifies and explains several robot navigation settings, by characterizing environment ambiguities in terms of covering spaces.
%Finally, Section~\ref{sec:Revisiting}
%revisited the minimalist examples of Section~\ref{sec:Loops}
%with our new tool box. 

\small
\bibliographystyle{plain}
\bibliography{Robo.bib}

\end{document}